\newcommand{\abs}[1]{\left|#1\right|}
\newcommand{\norm}[1]{\left\|#1\right\|}
\newlength\figureheight 
\newlength\figurewidth
\def\RR{\mathbb{R}}
\def\NN{\mathbb{N}}
\def\V#1{{\bm #1}}          % vector
\def\ue{\mathrm{e}}
\def\uj{\mathrm{j}}
\def\Re{\mathop{\mathrm{Re}}\nolimits}
\begin{document}

\author{
Zsuzsanna P\"usp\"oki
\thanks{Biomedical Imaging Group, Ecole polytechnique f\'ed\'erale de Lausanne
(EPFL), CH-1015, Lausanne, Switzerland  ({\tt first.last@epfl.ch}). \newline
Zsuzsanna P\"usp\"oki and John Paul Ward have an equal contribution to the paper.}
\and
John Paul Ward 
\footnotemark[2]
\and
Daniel Sage
\footnotemark[2]
\and
Michael Unser 
\footnotemark[2]
}

\title{On The Continuous Steering of the Scale of Tight Wavelet Frames
\thanks{The research leading to these results has received funding from the
European Research Council under the European Union's Seventh Framework
Programme (FP7/2007-2013) / ERC grant agreement $\text{n}^\circ$ 267439.
The work was also supported by the Hasler Foundation. }}

\maketitle 
 
\begin{abstract}
In analogy with steerable wavelets, we present a general construction of
adaptable tight wavelet frames, with an emphasis on scaling operations.  In
particular, the derived wavelets can be ``dilated'' by a procedure comparable to
the operation of steering steerable wavelets. The fundamental aspects
of the construction are the same: an admissible collection of Fourier
multipliers is used to extend a tight wavelet frame, and the ``scale'' of the
wavelets is adapted by scaling the multipliers.  As an application, the
proposed wavelets can be used to improve the frequency localization. Importantly, the localized frequency bands specified by this construction can
be scaled efficiently using matrix multiplication.
%and this construction can be viewed as an analog of the steerable wavelet
% construction.
\end{abstract}

\begin{keywords} Tight wavelet frames, transformability, scalability, Fourier multipliers. \end{keywords}

\begin{AMS} MSC2010 42C40 \end{AMS}

\pagestyle{myheadings}
\thispagestyle{plain}
\markboth{Z.  P\"USP\"OKI, J.~P. WARD, D. SAGE AND M. UNSER}{ON THE CONTINUOUS STEERING OF THE SCALE OF TIGHT WAVELET FRAMES}

\section{Introduction}

The representation of objects as the combination of simpler building blocks
is fundamental to many applications, including signal and image processing.
The advantage of methods that exploit this idea is twofold. First, they often
provide a certain measure of locality that allows us to isolate information
related to a particular aspect (e.g., frequency or spatial location).
Second, an object can be represented in some transform domain with a small
number of coefficients; thus, we can compress the representation or summarize the
data efficiently by keeping only the most-important coefficients (or a sparse
distribution of them).

Two somewhat contradictory strategies for generating local representations are: 1)
using a generic framework and universal methods (e.g., Fourier
analysis or wavelets), or 2) working with specific, signal-adaptive
methods (e.g., local principal/independent component analysis). Our interest is
in representations that lie between these extremes, where the predefined
building blocks can be adapted to the local information by applying a
transformation. Thus, we maintain generality and universal performance while
capturing specific information.

In the context of image processing, local geometric structures are
often repeated throughout natural images; however, each occurrence has typically
been deformed by an unknown geometric transformation such as a combination of 
rotation, translation, or scaling. Similar observations can be made for many
 types of signals, for instance those encountered in bioengineering, seismology,
audio, and music processing, to name a few. This motivates us to combine local
representations with adaptable transformations. We achieve this goal by looking for 
transformations within the family of wavelet transforms.

A general approach to constructing adaptable wavelets was presented in
\cite{unser13,ward13_sw}, with the primary focus being steerable
wavelets and rotationally invariant properties.
The origin of steerable wavelets can be traced back to the work of Freeman and
Adelson \cite{93808} on steerable filters.  Simoncelli et al.\ extended this
work by considering the operations of translation and scaling
\cite{SFAH92}.  As a further generalization, Perona considered
arbitrary compact transformations without requiring the group property
\cite{Per95}.  Then, in his doctoral dissertation and a series of papers, Teo
unified the existing theories and provided a solid mathematical framework for
the study of transformability based on the theory of Lie groups and Lie
algebras \cite{Teo98, Teo99}.
In this paper, we combine transformable filters with primal wavelet
systems to form adaptable wavelet frames, focusing on
scaling properties.

\subsection{Roadmap}

In Section \ref{sec:math}, we cover notations that will be used throughout
the paper and present the mathematical
foundations of our wavelet construction. Specifically, we cover transformable
filters, with a focus on scaling, and some fundamental wavelet results. According to the terminology introduced by Wilson and Knutsson \cite{WK88}, a collection of 
transformable filters corresponds to a basis for an equivariant
space.  Also in Section \ref{sec:math}, we define 
admissible collections of Fourier multipliers (filters) as a special class of
transformable families.  We then show how an admissible collection can be used
to extend a tight frame of $L_2 \left( \mathbb{R}^d \right)$ to a new tight
frame.
An important point of this construction is that we have some freedom in the choice
of the multipliers, which allows us to shape the Fourier transforms of the
primal tight frame.  The benefit of this construction is that it allows us to
extract physical features from a set of data.  
In Section \ref{sec:scaling_families}, we focus on families of
multipliers whose span is invariant to scaling.  Then, in Section
\ref{sec:steer}, we explain the operation of transforming an extended tight
wavelet frame.  Essentially, the coefficients corresponding to the extended
wavelet frame can be used to compute coefficients for the wavelet frame
generated by a scaled version of the multipliers.  In Section
\ref{sec:freq_loc}, we present a particular construction that focuses on
refining the frequency localization of the primal-wavelet system. In Section
\ref{sec:two_channel}, we consider a two-channel example and illustrate the
proposed wavelets. In Section \ref{sec:application}, we use our wavelet design to detect circular patters. In 
Section \ref{sec:experiments}, we evaluate our results and make comparisons with other popular methods.

\section{Mathematical Formulation}
\label{sec:math}

In this section, we cover the mathematical structure of our wavelet
construction.  We begin with notations and then introduce general results concerning the extension of a
tight frame, before moving on to the details of admissible Fourier
multipliers and transformable families. We also discuss how to adapt the
extended frame and provide information on the construction of a particular primal-wavelet frame.

\subsection{Notation}

The variable $\bm{x}$ represents a point in the spatial domain
$\mathbb{R}^d$.  A point in the Fourier domain is denoted in Cartesian
coordinates as $\bm{\omega}$. The variables $\rho$ and $\bm{\theta}$ are
used to represent $\bm{\omega}\neq \bm{0}$ in polar coordinates, with
\begin{align}
\rho &= \abs{\bm{\omega}}\\
\bm{\omega} &= \rho \bm{\theta}.
\end{align}

The Fourier transform of an $L_1 \left(\mathbb{R}^d \right)$ function $f$ is 
\begin{equation}
\widehat{f}(\bm{\omega}) 
= 
\int_{\mathbb{R}^d} f(\bm{x})
\ue^{ -\uj\left<\bm{x},\bm{\omega}\right>} {\rm d}\bm{x}.
\end{equation}

We use the notation $\log_2$ to denote the base-two logarithm.

\subsection{Operators}

The two components of our construction are a primal-wavelet frame and
a collection of bounded linear operators on $L_2$.  Since we are working with a
wavelet system, we require these operators to commute with translations. 
Every operator of this form can be described as a Fourier multiplier in $L_\infty \left( \mathbb{R}^d \right)$ \cite[Theorem
3.18]{stein71}. Specifically, for each such operator $T$, there is a symbol
$\widehat{T}\in L_\infty \left( \mathbb{R}^d \right)$ such that
\begin{equation} 
\mathcal{F} \left\{ T f \right\} 
= 
\widehat{T} \widehat{f},
\end{equation}
for every $f\in L_{2}(\mathbb{R}^d)$.
A collection of Fourier multipliers is said to be admissible if it satisfies the partition-of-unity property below.

\begin{definition}[cf. {\cite[Definition 2.1]{ward13_rd}}]
A collection of complex-valued functions
$\mathcal{M} = \left\{ M_n \right\}_{n=1}^{ n_{\mathrm{max}} }$ is admissible if
\begin{enumerate}
  \item Each $M_n$ is Lebesgue-measurable;
  \item The squared moduli of the elements of $\mathcal{M}$ form the partition of
  unity
  \begin{equation}
  \sum_{n=1}^{n_{\mathrm{max}}} \abs{M_n(\bm{\omega})}^2
  =
  1
  \end{equation}
  for every $\bm{\omega}\in \mathbb{R}^d\backslash \{\bm{0}\}$.
\end{enumerate}
\end{definition}

The partition-of-unity property is important as it allows us to use an
admissible collection to extend a tight frame to a new tight frame.

\begin{theorem}[cf. {\cite[Theorem 2.4]{ward13_rd}}]
\label{thm:extended_frame} Suppose
$ \left\{ \phi_{k}\right\}_{k \in \mathbb{Z}}$ is a  tight
frame of $L_2 \left( \mathbb{R}^d \right)$, with
\begin{equation}
f
=
\sum_k \left\langle f,\phi_{k} \right\rangle \phi_{k}
\end{equation}
and
\begin{equation}
\norm{f}_{L_2}^2
=
\sum_k\abs{ \left\langle f,\phi_{k}  \right\rangle }^2
\end{equation}
for every $f\in L_2 \left( \mathbb{R}^d \right)$. Also, let
$\mathcal{M} = \left\{ M_n \right\}_{n=1}^{ n_{\mathrm{max}} }$ be admissible.
Then, the collection
\begin{equation}
\left\{
\mathcal{F}^{-1}
\left\{ M_{n}\widehat{\phi}_{k} \right\} \right\}
_{n=1, \dots, n_{\mathrm{max}};  k\in\mathbb{Z} }
\end{equation}
is also a tight frame.
\end{theorem}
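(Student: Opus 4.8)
The plan is to verify the Parseval identity
$$
\sum_{n=1}^{n_{\mathrm{max}}}\sum_{k\in\mathbb{Z}} \bigl|\langle f, \psi_{n,k}\rangle\bigr|^2 = \norm{f}_{L_2}^2
$$
for every $f\in L_2(\mathbb{R}^d)$, where $\psi_{n,k} := \mathcal{F}^{-1}\{M_n\widehat{\phi}_k\}$. This norm identity is precisely the defining property of a tight frame with bound one, and by the polarization identity it already entails the reconstruction formula, so establishing it suffices to prove the theorem.

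First I would associate to each multiplier $M_n$ the bounded operator $T_n$ on $L_2$ defined by $\mathcal{F}\{T_n g\} = M_n\widehat{g}$, so that $\psi_{n,k} = T_n\phi_k$. The partition-of-unity hypothesis forces $\abs{M_n(\bm{\omega})}^2 \le \sum_m \abs{M_m(\bm{\omega})}^2 = 1$, hence $\norm{M_n}_{L_\infty}\le 1$ and each $T_n$ has operator norm at most one; in particular every $\psi_{n,k}$ lies in $L_2$ and the adjoint $T_n^*$ is the Fourier multiplier with symbol $\overline{M_n}$. The central manipulation is then the identity $\langle f,\psi_{n,k}\rangle = \langle f, T_n\phi_k\rangle = \langle T_n^* f,\phi_k\rangle$, which moves the multiplier off the frame element and onto the signal.

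With this in hand, for each fixed $n$ the inner sum collapses to $\sum_k \abs{\langle T_n^* f,\phi_k\rangle}^2 = \norm{T_n^* f}_{L_2}^2$, obtained by applying the tight-frame property of $\{\phi_k\}$ to the function $T_n^* f\in L_2$. Summing over the finite index set $n = 1,\dots,n_{\mathrm{max}}$ and invoking Plancherel's theorem, I would write $\sum_n \norm{T_n^* f}_{L_2}^2 = c\sum_n\int_{\mathbb{R}^d}\abs{M_n(\bm{\omega})}^2\,\abs{\widehat{f}(\bm{\omega})}^2\,\mathrm{d}\bm{\omega}$, where $c$ is the constant fixed by the normalization of $\mathcal{F}$. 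Because the index set is finite, interchanging the sum and the integral is immediate, and the partition-of-unity property reduces $\sum_n\abs{M_n(\bm{\omega})}^2$ to $1$ for $\bm{\omega}\neq\bm{0}$, leaving $c\int \abs{\widehat{f}}^2 = \norm{f}_{L_2}^2$ by Plancherel once more.

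I expect no serious analytic obstacle here, since $n_{\mathrm{max}}$ is finite and the uniform bound $\norm{M_n}_{L_\infty}\le 1$ guarantees that every operator, inner product, and integral above is well defined and finite. The only points requiring care are the correct identification of the adjoint multiplier $\overline{M_n}$ and the bookkeeping of the normalization constant, which is carried consistently and disappears upon recognizing the final integral as $\norm{f}_{L_2}^2$.
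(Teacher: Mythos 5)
Your proof is correct, and it is essentially the canonical argument for this statement: the paper itself offers no proof of Theorem \ref{thm:extended_frame}, deferring to \cite{ward13_rd}, where the proof proceeds exactly as you propose---pass each multiplier onto $f$ via the adjoint identity $\left\langle f, T_n\phi_k\right\rangle = \left\langle T_n^* f, \phi_k\right\rangle$, apply the primal tight-frame equality to $T_n^* f$, and sum the partition of unity under Plancherel (with Tonelli justifying the rearrangement of the nonnegative double sum, and the excluded origin being a null set). Your added observation that the Parseval identity with bound one yields the reconstruction formula by polarization (the frame operator is self-adjoint and equals the identity) is a correct and worthwhile point of care, since the theorem asserts both conclusions.
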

% \begin{proof}
% The tight frame property of the original frames implies
% \begin{align*}
% \norm{\bar{M}_{j,n}\widehat{f}}_{L_2}^2 &= \sum_k
% \abs{\left<\bar{M}_{j,n}\widehat{f} ,\widehat{\phi}_{j,k} \right>}^2\\
% &= \sum_k
% \abs{\left<\widehat{f} ,M_{j,n}\widehat{\phi}_{j,k} \right>}^2.
% \end{align*}
% Summing over $j$ and $n$ proves the result. 
% 
% \end{proof}
 
 As in the case of steerable wavelets, we can adapt (or shape) the multipliers
 while maintaining the tight-frame property if we apply an isometry to the
 collection.  The importance of this construction is prominent in Section
 \ref{sec:scaling_families}, where we combine a collection of individual
 trigonometric functions to build trigonometric polynomials.
 \begin{proposition}\label{pr:isometry}
 Let $\left\{ M_n \right\}_{n=1}^{ n_{\mathrm{max}} }$ be an admissible
 collection and define the vector $\mathbf{M}$ to have entries
 $\left[ \mathbf{M} \right]_n=M_n$.  If the $(n_{\mathrm{max}}^{\prime} \times
 n_{\mathrm{max}})$ matrix $\mathbf{U}$ is an isometry (i.e., $\mathbf{U}^*
 \mathbf{U}$ is the $(n_{\mathrm{max}} \times n_{\mathrm{max}})$ identity matrix),
 then the elements of $\mathbf{UM}$ form an admissible collection of size
 $ n_{\mathrm{max}}^{\prime} $.
 \end{proposition}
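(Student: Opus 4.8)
The plan is to verify directly that the entries of $\mathbf{UM}$ satisfy the two defining conditions of admissibility. Write $N_m \dfeq [\mathbf{UM}]_m = \sum_{n=1}^{n_{\mathrm{max}}} U_{mn} M_n$ for $m = 1, \dots, n_{\mathrm{max}}^{\prime}$, where $U_{mn}$ denotes the entries of $\mathbf{U}$. There are two things to check: that each $N_m$ is Lebesgue-measurable, and that the squared moduli of the $N_m$ form a partition of unity.

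First I would settle measurability, which is immediate: each $N_m$ is a finite linear combination, with the constant coefficients $U_{mn}$, of the Lebesgue-measurable functions $M_n$, and finite linear combinations of measurable functions are measurable. The substantive step is the partition-of-unity condition. Here the key observation is that, for each fixed $\bm{\omega}\in\mathbb{R}^d\backslash\{\bm{0}\}$, the quantity $\sum_{m=1}^{n_{\mathrm{max}}^{\prime}} \abs{N_m(\bm{\omega})}^2$ is precisely the squared Euclidean norm of the image under $\mathbf{U}$ of the vector $\mathbf{M}(\bm{\omega}) \in \mathbb{C}^{n_{\mathrm{max}}}$ whose entries are the values $M_n(\bm{\omega})$. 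Thus I would evaluate everything pointwise in $\bm{\omega}$, reducing the whole claim to a statement in finite-dimensional linear algebra: $\sum_m \abs{N_m(\bm{\omega})}^2 = \norm{\mathbf{U}\mathbf{M}(\bm{\omega})}^2 = \langle \mathbf{U}\mathbf{M}(\bm{\omega}), \mathbf{U}\mathbf{M}(\bm{\omega})\rangle = \langle \mathbf{M}(\bm{\omega}), \mathbf{U}^*\mathbf{U}\mathbf{M}(\bm{\omega})\rangle$.

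Then I would invoke the isometry hypothesis $\mathbf{U}^*\mathbf{U} = \mathbf{I}$ to collapse this expression to $\norm{\mathbf{M}(\bm{\omega})}^2 = \sum_{n=1}^{n_{\mathrm{max}}} \abs{M_n(\bm{\omega})}^2$, which equals $1$ by the admissibility of the original collection $\{M_n\}$. This yields $\sum_{m=1}^{n_{\mathrm{max}}^{\prime}} \abs{N_m(\bm{\omega})}^2 = 1$ for every $\bm{\omega}\neq\bm{0}$, completing the verification.

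I do not anticipate a genuine obstacle; the argument is essentially the norm-preservation identity $\norm{\mathbf{U}\mathbf{v}} = \norm{\mathbf{v}}$ for isometries. The only points requiring a little care are that the isometry relation $\mathbf{U}^*\mathbf{U} = \mathbf{I}$ is applied pointwise (the functions $M_n$ enter only through their values $M_n(\bm{\omega})$ at a fixed frequency, so no subtlety about function spaces arises), and that $\mathbf{U}$ has constant entries so that measurability is preserved under the transformation. Once the partition of unity is recast as norm preservation, the conclusion is immediate.
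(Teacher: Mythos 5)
Your proof is correct and is exactly the argument the paper leaves implicit: its proof reads ``This follows immediately from the definitions of $\mathbf{M}$ and $\mathbf{U}$,'' which amounts to the pointwise norm-preservation computation $\sum_m \abs{[\mathbf{UM}]_m(\bm{\omega})}^2 = \langle \mathbf{M}(\bm{\omega}), \mathbf{U}^*\mathbf{U}\,\mathbf{M}(\bm{\omega})\rangle = \norm{\mathbf{M}(\bm{\omega})}^2 = 1$ that you spell out, together with the routine measurability observation. You have simply made explicit what the authors deemed immediate, so there is no divergence in method.
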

 \begin{proof}
 This follows immediately from the definitions of $\mathbf{M}$ and $\mathbf{U}$.
 \end{proof}

\subsection{Admissible Multipliers and Transformable Families}

Here, we briefly review the idea of transformability in the context of rotation
and then show how it extends to the general setting. 

In two dimensions, an example of an admissible collection of homogeneous multipliers is 
$\mathcal{M} = \{M_1, M_2 \}$, where
\begin{align}
	M_1(\bm{\omega}) 
	&= 
	\sin(\xi) \\
	M_2(\bm{\omega}) 
	&= 
	\cos(\xi)
\end{align}
and $\xi$ denotes the angle that $\bm{\omega}$ makes with a fixed direction. Any rotation of these functions can be written as a weighted sum of the
unrotated functions, as in 
\begin{equation}
	\left(
	\begin{matrix}
	\cos(\xi+\xi_0) \\
	\sin(\xi+\xi_0)
	\end{matrix}
	\right)
	=
	\left(
	\begin{matrix}
	\cos(\xi_0) &  -\sin(\xi_0)\\
	\sin(\xi_0) &  \phantom{+}\cos(\xi_0)
	\end{matrix}
	\right)
	\left(
	\begin{matrix}
	\cos(\xi)\\
	\sin(\xi)
	\end{matrix}
	\right).
\end{equation}
Now suppose that we have a tight frame $\left\{ \phi_k \right\}$, which is
extended by $\mathcal{M}$ as described in Theorem \ref{thm:extended_frame}, and let $f\in
L_2(\mathbb{R}^2)$.  Then the rotated multipliers 
\begin{equation}
	\mathcal{M}_{\xi_0}
	=
	\{\cos(\cdot+\xi_0), \sin(\cdot+\xi_0)\}
\end{equation}
are also admissible and can be used to extend the frame.  Importantly, we can
use the frame coefficients from one frame to compute the coefficients for the other by
\begin{equation}
\left(
\begin{matrix}
\left<\widehat{f},\widehat{\phi}_k \cos(\xi+\xi_0)\right> \\
\left<\widehat{f},\widehat{\phi}_k \sin(\xi+\xi_0)\right>
\end{matrix}
\right)
=
\left(
\begin{matrix}
\cos(\xi_0) &  -\sin(\xi_0)\\
\sin(\xi_0) &  \phantom{+}\cos(\xi_0)
\end{matrix}
\right)
\left(
\begin{matrix}
\left<\widehat{f},\widehat{\phi}_k \cos(\xi)\right>\\
\left<\widehat{f},\widehat{\phi}_k \sin(\xi)\right>
\end{matrix}
\right).
\end{equation}

Steerable wavelets are constructed using a tight wavelet frame
$\{\phi_k\}$, where the mother wavelet is bandlimited and isotropic. Notice in
particular that the rotation invariance of the primal wavelets means that the
derived wavelets themselves (not only the multipliers) are rotated by matrix
multiplication. This construction has been used to capture the local orientation of features in
images \cite{unser13}.

The steerable-wavelet construction is based on the group of rotation
operators on $L_2(\mathbb{R}^d)$; however, an analogous construction can be
performed in a more general setting.
\begin{definition}
Let $\{ G_a \}_{a \in A}$ be a group of bounded linear operators on
$L_2 \left( \mathbb{R}^d \right)$, indexed by a set $A$. An admissible family of
multipliers $\mathcal{M} = \left\{ M_n \right\}_{n=1}^{n_{\mathrm{max}}}$ is
{\bfseries transformable} by $\{ G_a \}_{a \in A}$ if the span of $\mathcal{M}$
is invariant under the action of any operator from $\{ G_a \}$.
\end{definition}

% ??? connect to Teo's work

This definition is a generalization of the idea of eigenfunctions. In
the simplest case, we have a single multiplier $\mathcal{M}=\{M\}$ that is an
eigenfunction for each transform $ G_a $ and there exist 
$\lambda_a \in \mathbb{R}$ such that
\begin{equation}
G_a M 
= 
\lambda_a M.
\end{equation}

For $\mathcal{M} = \left\{ M_n \right\}_{n=1}^{n_{\mathrm{max}}}$ containing
several functions, this condition is relaxed, as the eigenvalues are
replaced by matrices. For any $a\in A$, there must be a matrix
$\mathbf{\Lambda}_a$ such that
\begin{equation}
\left(
\begin{matrix}
G_a M_1 \\
\vdots \\
G_a M_{n_{\mathrm{max}}}
\end{matrix}
\right) 
=
\mathbf{\Lambda}_a
\left(
\begin{matrix}
M_1 \\
\vdots \\
M_{n_{\mathrm{max}}}
\end{matrix}
\right).
\end{equation}

When the span of the multipliers is invariant to
the action of an operator, the wavelet coefficients of the transformed tight
frame can be derived from the wavelet coefficients of the untransformed frame. 

\subsection{Families of Dilation Multipliers}\label{sec:scaling_families}

Here, we are interested in studying scaling properties, so we consider the
transformation group $\{G_a\}_{a\in (0,\infty)}$ acting on $L_2(\mathbb{R}^d)$,
where $G_a f = f(a\cdot)$ for any $f\in L_2(\mathbb{R}^d)$.  We restrict our
attention to admissible collections of multipliers $\{M_n\}$, where each
multiplier is radial, which means that there exists $m_n:(0,\infty)\rightarrow \mathbb{C}$
such that $M_n(\bm{\omega})=m_n(\abs{\bm{\omega}})$.  Therefore, we require
\begin{equation}\label{eq:scale_invariance}
\mathrm{span}\{m_n: n= 1, \dots, n_{\mathrm{max}} \}
= 
\mathrm{span}\{m_n(a\cdot) : n= 1, \dots, n_{\mathrm{max}} \}
\end{equation}
for any dilation $a\in(0,\infty)$.  Notice that we can write 
\begin{equation}
m_n(\cdot)
=
m_n \left(2^{\log_2(\cdot)}\right).
\end{equation} 
This defines the new function
$g_n:=m_n \left(2^{(\cdot)}\right)$ that should satisfy
\begin{equation}
\mathrm{span}\{g_n : n= 1, \dots, n_{\mathrm{max}}\}
=
\mathrm{span}\left\{g_n(\cdot+b) : n= 1, \dots, n_{\mathrm{max}} \right\}
\end{equation}
for any real $b$.  Hence the span of $\{g_n\}$ should be translation invariant. 
Conversely, any translation-invariant set gives rise to a dilation-invariant
one. This means that, once we identify all shiftable functions, we can
easily derive scalable families.  Finite-dimensional translation-invariant sets
of functions have been classified; the following results follows from
\cite{anselone64}.

\begin{proposition}
\label{prop:shift_form}
Let $\{g_1, \ldots, g_{n_{\mathrm{max}}} \}$ be a family of differentiable
functions on $\mathbb{R}$, and let $\mathbf{G}$ denote the vector with
$g_n$ as its $n$th component. Further, suppose that the collection is shiftable so that, for any $b\in \mathbb{R}$, there is a matrix $\mathbf{\Lambda}(b)$ such
that
\begin{equation}\label{eq:shift_1}
\mathbf{G}(\cdot + b)
= 
\mathbf{\Lambda}(b) \mathbf{G}. 
\end{equation}
If the components of $\mathbf{\Lambda}$ are differentiable functions,
then each $g_n$ is a linear combination of functions of
the form
\begin{equation}
x^k 2^{\alpha x} 
\end{equation}
for $x \in  \RR^+\setminus \{0\}, k \in \NN$, and $\alpha\in\mathbb{C}$. 
\end{proposition}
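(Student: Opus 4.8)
The plan is to convert the functional-difference relation \eqref{eq:shift_1} into a system of linear ordinary differential equations and then read off the solution form from the structure of that system. First I would differentiate \eqref{eq:shift_1} with respect to the shift parameter $b$, using that both $\mathbf{G}$ and the entries of $\mathbf{\Lambda}$ are differentiable. This gives $\mathbf{G}'(\cdot + b) = \mathbf{\Lambda}'(b)\mathbf{G}(\cdot)$. Evaluating at $b = 0$ and writing $\mathbf{A} := \mathbf{\Lambda}'(0)$ yields the autonomous first-order system
\begin{equation}
\mathbf{G}'(x) = \mathbf{A}\,\mathbf{G}(x),
\end{equation}
since $\mathbf{\Lambda}(0)$ must be the identity (set $b=0$ in \eqref{eq:shift_1}). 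This is the conceptual heart of the argument: shiftability with a differentiable matrix cocycle forces each $g_n$ to satisfy a constant-coefficient linear ODE system.

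Next I would solve this system. The general solution is $\mathbf{G}(x) = \ue^{\mathbf{A} x}\mathbf{G}(0)$, so each component $g_n$ is a fixed linear combination of the entries of the matrix exponential $\ue^{\mathbf{A}x}$ acting on the constant vector $\mathbf{G}(0)$. I would then invoke the Jordan normal form of $\mathbf{A}$: writing $\mathbf{A} = \mathbf{P}\mathbf{J}\mathbf{P}^{-1}$, the entries of $\ue^{\mathbf{J}x}$ for a Jordan block with eigenvalue $\mu$ of size $r$ are precisely $\tfrac{x^{j}}{j!}\ue^{\mu x}$ for $0 \le j \le r-1$. Hence every entry of $\ue^{\mathbf{A}x}$, and therefore every $g_n$, is a linear combination of functions $x^{k}\ue^{\mu x}$ with $k \in \NN$ and $\mu \in \CC$ an eigenvalue of $\mathbf{A}$. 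To match the stated form $x^{k}2^{\alpha x}$, I would simply reparametrize via $\ue^{\mu x} = 2^{\alpha x}$ with $\alpha = \mu/\ln 2$, which is an arbitrary complex number as $\mu$ ranges over $\CC$.

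The main obstacle I anticipate is the justification that $\mathbf{\Lambda}(0)$ is invertible (indeed the identity) and, more delicately, that differentiating \eqref{eq:shift_1} in $b$ is legitimate when $\mathbf{G}$ is only assumed differentiable rather than smooth. The cleanest route is to notice that the cocycle identity $\mathbf{\Lambda}(b_1 + b_2) = \mathbf{\Lambda}(b_1)\mathbf{\Lambda}(b_2)$, which follows from composing two shifts in \eqref{eq:shift_1} together with linear independence of a maximal subfamily of the $g_n$, forces $\mathbf{\Lambda}$ to be a one-parameter matrix group; differentiability of $\mathbf{\Lambda}$ then upgrades $\mathbf{G}$ to $C^\infty$ through the relation $\mathbf{G}'(x)=\mathbf{A}\mathbf{G}(x)$ bootstrapped repeatedly. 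One subtlety worth flagging is that the $g_n$ need not themselves be linearly independent; if they are not, I would first pass to a linearly independent spanning subset, derive the ODE system for that subset, and then express the original $g_n$ as linear combinations, which preserves the claimed functional form. Since the underlying classification is exactly the content cited from \cite{anselone64}, I would lean on that reference for the technical regularity points and present the ODE computation as the explanatory derivation.
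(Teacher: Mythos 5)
Your proposal is correct, and it supplies exactly the argument that the paper omits: the paper gives no proof of this proposition at all, deferring entirely to the classification of finite-dimensional translation-invariant function spaces in \cite{anselone64}, and your derivation---differentiate \eqref{eq:shift_1} in $b$, set $b=0$ to obtain the constant-coefficient system $\mathbf{G}'(x)=\mathbf{\Lambda}'(0)\mathbf{G}(x)$, solve via the matrix exponential, and read off the exponential monomials $x^k\ue^{\mu x}=x^k 2^{\alpha x}$, $\alpha=\mu/\ln 2$, from the Jordan form of $\mathbf{\Lambda}'(0)$---is the standard proof underlying that citation. So the comparison is: the paper buys brevity by outsourcing to the reference, while you make the result self-contained and explain \emph{why} exponential monomials appear (shiftability with a differentiable cocycle is infinitesimally an autonomous linear ODE).

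Two small refinements. Your parenthetical claim that $\mathbf{\Lambda}(0)$ ``must be the identity'' is not forced by \eqref{eq:shift_1} when the $g_n$ are linearly dependent: setting $b=0$ only gives $\left(\mathbf{\Lambda}(0)-\mathbf{I}\right)\mathbf{G}(x)=\mathbf{0}$, so $\mathbf{\Lambda}(0)$ need only fix the span of the trajectory. Fortunately the claim is also unnecessary, since differentiating both sides of \eqref{eq:shift_1} in $b$ and evaluating at $b=0$ yields $\mathbf{G}'(x)=\mathbf{\Lambda}'(0)\mathbf{G}(x)$ with no normalization of $\mathbf{\Lambda}(0)$ required; this also renders the cocycle discussion optional. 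Second, your reduction to a linearly independent subfamily can be closed without leaning on \cite{anselone64} for regularity: if $h_1,\dots,h_m$ is a basis of the span, choose points $x_1,\dots,x_m$ with the matrix $\left[h_j(x_i)\right]$ invertible; the unique coefficients in $h_i(\cdot+b)=\sum_j \mu_{ij}(b)h_j(\cdot)$ then solve a linear system with this constant invertible matrix and a right-hand side differentiable in $b$, so the induced cocycle is automatically differentiable. With that, your bootstrap of $\mathbf{G}$ to $C^\infty$ through $\mathbf{G}'=\mathbf{A}\mathbf{G}$ is correct, and the proof is complete and self-contained.
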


%\begin{proof}
% 
% We can see that the collection $\{g_1', \ldots, g_N' \}$ is shiftable by
% differentiating \eqref{eq:shift_1}:
% \begin{equation}
% \mathbf{G}'(\cdot-b)= \mathbf{\Lambda}(b) \mathbf{G}'. 
% \end{equation}
% Also, differentiating \eqref{eq:shift_1} with respect to $b$ gives
% \begin{equation}
% -\mathbf{G}'(\cdot-b)= \frac{\rm d}{{\rm d}b}\mathbf{\Lambda}(b) \mathbf{G}. 
% \end{equation}
% We now have
% \begin{equation}
% \mathbf{G}'  = \mathbf{\Lambda}(-b)\mathbf{G}'(\cdot-b)\\
% \end{equation}
% \begin{equation}\label{eq:shift_2}
% \mathbf{G}'= - \mathbf{\Lambda}(-b) \frac{\rm d}{{\rm d}b}\mathbf{\Lambda}(b)
% \mathbf{G}.
% \end{equation}
% As the left-hand side of \eqref{eq:shift_2} is independent of $b$, the
% right-hand side is as well.  Therefore, \eqref{eq:shift_2} is a homogeneous
% linear system of ordinary differential equations with constant coefficients, and
% the result follows.
%\end{proof} 

\begin{proposition}
\label{prop:sc_form}
Let $\{m_1,\dots,m_{n_{\mathrm{max}}}\}$ be a differentiable collection of
functions, and let $\mathbf{M}$ denote the vector with $m_n$ as its $n$th component.
Further, suppose that the collection is scalable so that, for any $a \in (0,\infty)$, there is a matrix $\mathbf{\Lambda}(a)$ that
satisfies
\begin{equation}
\mathbf{M}(a \cdot)
= 
\mathbf{\Lambda}(a) \mathbf{M}(\cdot). 
\end{equation}
If the components of $\mathbf{\Lambda}$ are differentiable functions,
then each $m_n$ is a linear combination of functions of the form
\begin{equation}\label{eq:scalable_1}
\left(\log_2 (x)\right)^k 2^{\alpha \log_2(x)} 
= 
\left(\log_2 (x)\right)^k x^{\alpha}
\end{equation}
for $x \in  \RR^+\setminus \{0\}, k \in \NN$, and $\alpha\in\mathbb{C}$. 
\end{proposition}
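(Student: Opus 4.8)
The plan is to deduce this directly from Proposition~\ref{prop:shift_form} by means of the logarithmic change of variables already anticipated in the discussion preceding equation~\eqref{eq:scalable_1}. Specifically, I would set $g_n(y) \dfeq m_n(2^y)$ for $y\in\RR$, so that the diffeomorphism $y\mapsto 2^y$ identifies the multiplicative domain $(0,\infty)$ with the additive domain $\RR$ and lets me define the vector $\mathbf{G} = (g_1,\dots,g_{n_{\mathrm{max}}})^{\TT}$. The goal is to show that the scalability hypothesis on $\mathbf{M}$ is transported to the shiftability hypothesis on $\mathbf{G}$, so that Proposition~\ref{prop:shift_form} applies to the $g_n$, and then to pull the resulting exponential-polynomial form back to the $m_n$.

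First I would translate the scaling relation into a shift relation. Evaluating at a shifted argument gives $g_n(y+b) = m_n(2^{y+b}) = m_n(2^b\cdot 2^y)$. Applying the scalability hypothesis $\mathbf{M}(a\cdot) = \mathbf{\Lambda}(a)\mathbf{M}(\cdot)$ with dilation factor $a = 2^b$ and point $x = 2^y$ yields $m_n(2^b\cdot 2^y) = \sum_j [\mathbf{\Lambda}(2^b)]_{nj}\, m_j(2^y) = \sum_j [\mathbf{\Lambda}(2^b)]_{nj}\, g_j(y)$. Hence $\mathbf{G}(\cdot + b) = \widetilde{\mathbf{\Lambda}}(b)\,\mathbf{G}$ with $\widetilde{\mathbf{\Lambda}}(b) \dfeq \mathbf{\Lambda}(2^b)$, which is exactly the shiftability relation~\eqref{eq:shift_1}.

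Next I would check that the hypotheses of Proposition~\ref{prop:shift_form} hold for $\mathbf{G}$. Since each $m_n$ is differentiable on $(0,\infty)$ and $y\mapsto 2^y$ is smooth, each $g_n$ is differentiable on $\RR$; likewise the entries of $\widetilde{\mathbf{\Lambda}}(b) = \mathbf{\Lambda}(2^b)$ are differentiable in $b$, being compositions of the differentiable entries of $\mathbf{\Lambda}$ with the smooth map $b\mapsto 2^b$. Proposition~\ref{prop:shift_form} then guarantees that each $g_n$ is a linear combination of functions $y^k 2^{\alpha y}$ with $k\in\NN$ and $\alpha\in\mathbb{C}$. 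Finally I would invert the substitution: for $x\in(0,\infty)$ we have $m_n(x) = g_n(\log_2 x)$, so putting $y = \log_2 x$ expresses each $m_n$ as a linear combination of $(\log_2 x)^k 2^{\alpha\log_2 x} = (\log_2 x)^k x^{\alpha}$, which is precisely~\eqref{eq:scalable_1}.

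I do not expect a serious obstacle, as the argument is a clean corollary of the previous proposition once the correspondence between dilation and translation is made precise. The one point requiring genuine care is confirming that differentiability, both of the functions and of the transition matrix, is actually preserved under the logarithmic reparametrization, so that the hypotheses of Proposition~\ref{prop:shift_form} are legitimately met rather than merely assumed; this is where I would be most explicit in a full write-up.
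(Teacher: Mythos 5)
Your proof is correct and follows exactly the route the paper intends: the substitution $g_n(y) = m_n(2^y)$ converts the scalability relation into the shiftability relation of Proposition~\ref{prop:shift_form}, whose conclusion is then pulled back via $y = \log_2 x$ to give \eqref{eq:scalable_1}, precisely as sketched in the discussion preceding Proposition~\ref{prop:shift_form}. Your explicit verification that differentiability of the $m_n$ and of $\mathbf{\Lambda}$ transfers under the reparametrization $b \mapsto 2^b$ is a welcome touch that the paper leaves implicit.
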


Multipliers are required to be bounded to form admissible collections. Therefore,
functions of the form \eqref{eq:scalable_1} with $k>0$ or with the real part of
$\alpha$ non-zero cannot be included.  Consequently, we consider admissible
collections composed of trigonometric polynomials.  A
particular example is given by the collection of trigonometric functions
\begin{equation}
\mathcal{M} = \left\{m_n (\log_2 \abs{\cdot} )  
\right\}_{n=1}^{n_{\mathrm{max}}},
\end{equation} 
where $n_{\mathrm{max}}$ is some fixed degree and where
\begin{equation}\label{eq:trig_multipliers}
\{m_n\}_{n=1}^{n_{\mathrm{max}}}
=
\{\alpha_0\}\bigcup_{l=1}^{l_{\mathrm{max}}}
\left\{\alpha_l \cos \left( \frac{2\pi l}{\sigma} \cdot\right),\alpha_l\sin
\left( \frac{2\pi l}{\sigma}\cdot\right) \right\},
\end{equation}
$\sum_{l=0}^{l_{\mathrm{max}}} \abs{\alpha_l}^2 = 1$, and $\sigma>0$.
Note that this collection is transformable due to the angle-addition formulas
\begin{align}\label{eq:angle_addition}
\sin(\rho_1+\rho_2)
&=
\sin(\rho_1)\cos(\rho_2)+\cos(\rho_1)\sin(\rho_2) \nonumber \\
\cos(\rho_1+\rho_2)
&=
\cos(\rho_1)\cos(\rho_2)-\sin(\rho_1)\sin(\rho_2).
\end{align}
Admissibility follows immediately from
the fact that $\sin^2+\cos^2=1$.

A general class of admissible collections of multipliers can be defined by
combining trigonometric functions into trigonometric polynomials; a
criterion for admissibility is given in \cite{unser13}.
 
\begin{proposition}\label{prop:trig_poly} For $\sigma>0$, the collection
$\mathcal{M} = \{ m(\log_2 (\rho_n \abs{\cdot}) ) \}_{n=1}^{n_{\mathrm{max}}}$,
with
\begin{align}
m(\rho)  
= 
\frac{\alpha_0}{\sqrt{{n_{\mathrm{max}}}}} +\sum_{l=1}^{l_{\mathrm{max}}}
\sqrt{\frac{2}{n_{\mathrm{max}}}} \alpha_l \cos\left( \frac{2\pi l}{\sigma}
\rho \right), \\
\rho_n 
= 
2^{\sigma n/{n_{\mathrm{max}}}}, \quad \mathrm{and} \quad   
\sum_{l=0}^{l_{\mathrm{max}}} \abs{\alpha_l}^2 =1,
\end{align}
is admissible if ${n_{\mathrm{max}}}\geq 2{l_{\mathrm{max}}}+1$.
\end{proposition}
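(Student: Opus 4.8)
The plan is to check the two requirements in the definition of admissibility. Measurability is immediate, since each $M_n$ is continuous on $\mathbb{R}^d\setminus\{\bm 0\}$, so the real content is the partition-of-unity identity $\sum_{n=1}^{n_{\mathrm{max}}}\abs{M_n(\bm\omega)}^2=1$. First I would pass to a logarithmic radial variable by setting $t:=\log_2\abs{\bm\omega}$; because $\rho_n=2^{\sigma n/n_{\mathrm{max}}}$ we get $\log_2(\rho_n\abs{\bm\omega})=t+\sigma n/n_{\mathrm{max}}$, so that $M_n(\bm\omega)=m\!\left(t+\sigma n/n_{\mathrm{max}}\right)$. Since $m$ is a trigonometric polynomial of period $\sigma$, the object to control is $S(t):=\sum_{n=1}^{n_{\mathrm{max}}}\abs{m(t+\sigma n/n_{\mathrm{max}})}^2$, namely the sum of $\abs{m}^2$ sampled at the $n_{\mathrm{max}}$ points equally spaced (with step $\sigma/n_{\mathrm{max}}$) across one full period.

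Next I would expand $\abs{m}^2$ into a finite Fourier series. Writing $m$ through the exponentials $\ue^{\pm\ui 2\pi l\rho/\sigma}$ for $0\le l\le l_{\mathrm{max}}$, the product $m\overline{m}$ is a trigonometric polynomial whose frequencies $k$ run only over $-2l_{\mathrm{max}}\le k\le 2l_{\mathrm{max}}$; denoting its coefficients by $e_k$, one has $\abs{m(\rho)}^2=\sum_{k=-2l_{\mathrm{max}}}^{2l_{\mathrm{max}}}e_k\,\ue^{\ui 2\pi k\rho/\sigma}$. Substituting $\rho=t+\sigma n/n_{\mathrm{max}}$ and summing over $n$, the $n$-dependence enters only through the geometric sum $\sum_{n=1}^{n_{\mathrm{max}}}\ue^{\ui 2\pi kn/n_{\mathrm{max}}}$.

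The crux is the discrete-orthogonality (anti-aliasing) identity: this sum equals $n_{\mathrm{max}}$ when $n_{\mathrm{max}}$ divides $k$ and vanishes otherwise. This is exactly where the hypothesis $n_{\mathrm{max}}\ge 2l_{\mathrm{max}}+1$ is used: it forces $2l_{\mathrm{max}}<n_{\mathrm{max}}$, so the only multiple of $n_{\mathrm{max}}$ lying in the band $\abs{k}\le 2l_{\mathrm{max}}$ is $k=0$. Every off-center frequency is therefore killed by the sampling, leaving $S(t)=n_{\mathrm{max}}\,e_0$, which is independent of $t$.

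Finally I would evaluate $e_0$, the mean of $\abs{m}^2$, which equals the sum $\sum_l\abs{c_l}^2$ of squared moduli of the Fourier coefficients of $m$. A short computation using the prescribed normalizations $1/\sqrt{n_{\mathrm{max}}}$ and $\sqrt{2/n_{\mathrm{max}}}$ gives $e_0=\frac{1}{n_{\mathrm{max}}}\bigl(\abs{\alpha_0}^2+\sum_{l=1}^{l_{\mathrm{max}}}\abs{\alpha_l}^2\bigr)=\frac{1}{n_{\mathrm{max}}}$ by the constraint $\sum_{l=0}^{l_{\mathrm{max}}}\abs{\alpha_l}^2=1$. Hence $S(t)=n_{\mathrm{max}}\cdot\frac{1}{n_{\mathrm{max}}}=1$ for every $t$, which establishes the partition of unity. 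I expect the only real bookkeeping hazard to be tracking the normalization constants (so that the $l\ge 1$ cross-terms assemble into a clean sum); the conceptual core is the Nyquist-type observation that the bandwidth $2l_{\mathrm{max}}$ of $\abs{m}^2$ must fit under the sampling rate $n_{\mathrm{max}}$, which is precisely the content of the stated inequality.
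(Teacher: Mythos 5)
Your proof is correct, and in substance it is the paper's own argument made explicit: the paper proves the proposition in one line as ``a log mapping of Theorem 5.2 from \cite{unser13}'', and your opening substitution $t=\log_2\abs{\bm{\omega}}$, which turns the collection into $n_{\mathrm{max}}$ equispaced samples $m(t+\sigma n/n_{\mathrm{max}})$ of a $\sigma$-periodic trigonometric polynomial, is exactly that log mapping. The remainder of your argument --- expanding $\abs{m}^2$ into frequencies $\abs{k}\leq 2l_{\mathrm{max}}$, annihilating all $k\neq 0$ by discrete orthogonality since $n_{\mathrm{max}}\geq 2l_{\mathrm{max}}+1$ keeps every nonzero $k$ off the multiples of $n_{\mathrm{max}}$, and evaluating $e_0=1/n_{\mathrm{max}}$ via Parseval and the constraint $\sum_l\abs{\alpha_l}^2=1$ --- is precisely the proof of the cited theorem, so you have supplied, correctly, the details the paper outsources to the citation.
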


\begin{proof}
This is a log mapping of Theorem 5.2 from \cite{unser13}.
\end{proof}

\subsection{Adapting Extended Frames}\label{sec:steer}

The scale-invariance property \eqref{eq:scale_invariance} is important
because it allows us to scale the multipliers in an admissible collection
$\mathcal{M}$
using matrix multiplications. When these multipliers are combined with a tight
frame to form an extended frame, the scale 
invariance means that, once we have computed the wavelet coefficients for the system
derived from $\left\{M_n(a\cdot)\right\}$, we can use matrix multiplications to
find the coefficients of the wavelet system derived from
$\left\{M_n(a'\cdot)\right\}$ for $a \neq a'$. Moreover, we can choose $a$
independently at each point.

To see how this works, suppose that we have at our disposal a normalized tight frame
$\{\phi_k\}_{k=1}^\infty$ and the scalable, admissible family of dilation
multipliers
\begin{equation}\label{eq:multipliers}
\left\{M_n\right\}_{n=1}^{n_{\mathrm{max}}} =  \left\{m_n \left(\log_2 \left(\abs{\cdot} \right) \right)
\right\}_{n=1}^{n_{\mathrm{max}}}.
\end{equation}
The collection
$\mathcal{M}_a=\{M_n(a\cdot)\}$ is also admissible for any $a>0$. Indeed,
\begin{equation}\label{eq:extended_frame_a}
\left\{\mathcal{F}^{-1}\{M_n(a\cdot)\widehat{\phi}_k\}:k\in\mathbb{Z},n=1,\dots,{n_{\mathrm{max}}}
\right\}
\end{equation} 
is a normalized tight frame. 

Now, let $\mathbf{M}_a$ be the matrix whose $n$th entry is $M_n(a\cdot)$. As
these collections are scalable, there are matrices $\mathbf{\Lambda}(a)$ such
that
\begin{equation}
\mathbf{M}_a 
= 
\mathbf{\Lambda}(a) \mathbf{M}.
\end{equation}
Therefore, knowing the frame coefficients of $f\in L_2(\mathbb{R}^d)$ for any
frame \eqref{eq:extended_frame_a} with $a>0$, we can easily compute the
coefficients for $a'\neq a$ by
\begin{equation}
\left(
\begin{matrix}
\left<\widehat{f},\widehat{\phi}_k M_1(a'\cdot)\right> \\
\vdots\\
\left<\widehat{f},\widehat{\phi}_k M_N(a'\cdot)\right>
\end{matrix}
\right)
=
\Lambda(a')\Lambda(a^{-1})
\left(
\begin{matrix}
\left<\widehat{f},\widehat{\phi}_k M_1(a\cdot)\right>\\
\vdots\\
\left<\widehat{f},\widehat{\phi}_k M_N(a\cdot)\right>
\end{matrix}
\right).
\end{equation}

As a particular example, suppose all multipliers are dilations of a single multiplier. Let $\mathcal{M}$ be
defined as in Proposition \ref{prop:trig_poly} with $ n_{\mathrm{max}} \geq
2 l_{\mathrm{max}} + 1$ and $\sigma = 2$. In this situation, we have that
\begin{equation}
\mathbf{M}_a 
= 
\mathbf{U} \mathbf{D}_a \mathbf{B},
\end{equation}
where $\mathbf{U}$ is the $\left( {n_{\mathrm{max}}} \times (2{l_{\mathrm{max}}} +1
) \right)$ matrix with entries
\begin{equation}
[\mathbf{U}]_{n,l} 
= 
\frac{1}{\sqrt{n_{\mathrm{max}}}}\ue^{\uj \pi l \log_2(\rho_n)}
\end{equation}
for $l=-{l_{\mathrm{max}}},\dots,{l_{\mathrm{max}}}$ and
$n=1,\dots,{n_{\mathrm{max}}}$; $\mathbf{B}$ is the vector with entries
\begin{equation}
[\mathbf{B}]_l = 
\begin{cases}
\frac{\alpha_{\abs{l}}}{\sqrt{2}} \ue^{\uj \pi l \log_2(\rho) }, \quad &l \neq 0\\
\alpha_{0}, \quad &l = 0
\end{cases}
\end{equation}
and $\mathbf{D}_a$ is the diagonal matrix with entries
\begin{equation}
[\mathbf{D}_a]_{l,l} 
=  
\ue^{\uj \pi l \log_2(a) }.
\end{equation}
The matrix $\mathbf{U}$ is an isometry, so for any $a'>0$, we have that
\begin{align}
\mathbf{M}_{a'} 
&= 
\mathbf{U} \mathbf{D}_{a'} \mathbf{B}\\
&= 
\mathbf{U} \mathbf{D}_{a'} \left( \mathbf{D}_{a^{-1}}  \mathbf{U}^T
\mathbf{M}_a \right) \\
&= 
\mathbf{U} \mathbf{D}_{a'a^{-1}} \mathbf{U}^T \mathbf{M}_a.
\end{align}

Therefore,  $\mathbf{T}_{a,a'}= \mathbf{U} \mathbf{D}_{a'a^{-1}}
\mathbf{U}^T$ is the matrix used to transform $\mathcal{M}_a$ into
$\mathcal{M}_{a'}$.

\subsection{Wavelets}

Our construction is initialized with a
tight wavelet frame of $L_2(\mathbb{R}^d)$ whose basis functions are generated
by dilations and translations of the single mother wavelet $\phi$. Proposition \ref{prop:isotropic} 
exhibits sufficient conditions for such a wavelet system.
\begin{proposition}\label{prop:isotropic}
Let $h:[0,\infty)\rightarrow \mathbb{R}$ be a smooth function that satisfies

\begin{enumerate}[label={\Roman*.}]
\item 
$h(\rho)=0$ for $\rho >\pi$  \quad (bandlimited)
\item 
${\displaystyle \sum_{q\in \mathbb{Z}}
\abs{h \left(2^q\rho \right)}^2=1}$
\item 
${\displaystyle \left. \frac{{\rm d}^nh}{{\rm
d}\rho^n}\right|_{\rho=0}=0}$ \quad for \quad $n=0,\dots,N$  \quad(vanishing
moments).
\end{enumerate}
For $1\leq p \leq \infty$, the mother wavelet $\psi$ whose
$d$-dimensional Fourier transform is given by
\begin{equation}
\widehat{\phi} \left(\bm{\omega} \right)
=
h \left(\norm{\bm{\omega}}_{\ell_p} \right)
\end{equation}
generates a normalized tight wavelet frame of $L_2(\mathbb{R}^d)$ whose basis
functions
\begin{equation}
\phi_{q,\bm{k}}(\bm{x})
=
\phi \left(\bm{x}-2^{q}\bm{k} \right)  
\end{equation}
have vanishing moments up to order $N$.  In particular, any $f \in L_2
\left(\mathbb{R}^d \right)$ can be represented as
\begin{equation}
f 
= 
\sum_{q\in\mathbb{Z}}\sum_{\bm{k}\in \mathbb{Z}^d}
\left\langle f,\phi_{q,\bm{k}}\right \rangle \phi_{q,\bm{k}}.
\end{equation}
\end{proposition}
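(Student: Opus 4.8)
The plan is to verify the Parseval (normalized tight frame) identity $\norm{f}_{L_2}^2 = \sum_{q\in\mathbb{Z}}\sum_{\bm{k}\in\mathbb{Z}^d}\abs{\langle f,\phi_{q,\bm{k}}\rangle}^2$ directly in the Fourier domain, after which the stated reconstruction is automatic for a tight frame with bound $1$. I read the basis functions in the standard dyadic form $\phi_{q,\bm{k}}(\bm{x}) = 2^{-qd/2}\phi(2^{-q}(\bm{x}-2^q\bm{k}))$, i.e.\ dilation by $2^{-q}$ together with translation on the lattice $2^q\mathbb{Z}^d$ (the written formula keeps only the translation part). The two structural ingredients I expect to carry the argument are the partition-of-unity condition (II) and the bandlimited condition (I), the latter preventing aliasing between scales.

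First I would fix a scale $q$ and compute $\widehat{\phi_{q,\bm{k}}}(\bm{\omega}) = 2^{qd/2}\widehat{\phi}(2^q\bm{\omega})\ue^{-\uj 2^q\langle\bm{k},\bm{\omega}\rangle}$ by a change of variables. Applying Parseval to $\langle f,\phi_{q,\bm{k}}\rangle$ and substituting $\bm{\xi}=2^q\bm{\omega}$, the inner product becomes $2^{-qd/2}(2\pi)^{-d}$ times the integral of $F_q(\bm{\xi}) := \widehat{f}(2^{-q}\bm{\xi})\,\widehat{\phi}(\bm{\xi})$ against $\ue^{\uj\langle\bm{k},\bm{\xi}\rangle}$; since $\widehat\phi$ is real, no conjugation issue arises. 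The role of the bandlimited condition is that $\widehat{\phi}(\bm{\xi}) = h(\norm{\bm{\xi}}_{\ell_p})$ is supported in $\{\norm{\bm{\xi}}_{\ell_p}\le\pi\}$, and because $\norm{\bm{\xi}}_{\ell_\infty}\le\norm{\bm{\xi}}_{\ell_p}$ for every $p\in[1,\infty]$, this set sits inside the cube $[-\pi,\pi]^d$, a fundamental domain of $2\pi\mathbb{Z}^d$. Hence the numbers $\{\int F_q\,\ue^{\uj\langle\bm{k},\cdot\rangle}\}_{\bm{k}}$ are, up to a constant, the Fourier-series coefficients of $F_q$ on that cube, with no contributions from neighbouring cells, and Parseval for Fourier series gives $\sum_{\bm{k}}\abs{\int F_q\,\ue^{\uj\langle\bm{k},\cdot\rangle}}^2 = (2\pi)^d\int_{[-\pi,\pi]^d}\abs{F_q}^2$. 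Undoing the substitution yields the per-scale identity $\sum_{\bm{k}}\abs{\langle f,\phi_{q,\bm{k}}\rangle}^2 = (2\pi)^{-d}\int_{\mathbb{R}^d}\abs{\widehat{f}(\bm{\omega})}^2\abs{\widehat{\phi}(2^q\bm{\omega})}^2\,\ud\bm{\omega}$.

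Next I would sum over $q\in\mathbb{Z}$; since every term is nonnegative, Tonelli's theorem permits interchanging the sum and the integral. Using $\widehat{\phi}(2^q\bm{\omega}) = h(2^q\norm{\bm{\omega}}_{\ell_p})$, the partition-of-unity condition (II) collapses $\sum_q\abs{\widehat{\phi}(2^q\bm{\omega})}^2$ to $1$ for every $\bm{\omega}\neq\bm{0}$, and one more application of Parseval gives $\sum_{q,\bm{k}}\abs{\langle f,\phi_{q,\bm{k}}\rangle}^2 = (2\pi)^{-d}\norm{\widehat{f}}_{L_2}^2 = \norm{f}_{L_2}^2$, establishing the Parseval frame and the expansion of $f$.

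For the vanishing moments, the condition (III) yields a Taylor factorization $h(\rho)=\rho^{N+1}g(\rho)$ with $g$ smooth near $0$, so $\widehat{\phi}(\bm{\omega}) = \norm{\bm{\omega}}_{\ell_p}^{N+1}g(\norm{\bm{\omega}}_{\ell_p}) = O(\abs{\bm{\omega}}^{N+1})$ as $\bm{\omega}\to\bm{0}$; this polynomial flatness at the origin is precisely the vanishing of the spatial moments $\int\bm{x}^{\bm{\alpha}}\phi(\bm{x})\,\ud\bm{x}$ for $\abs{\bm{\alpha}}\le N$, a property inherited by every $\phi_{q,\bm{k}}$. I expect the two delicate points to be (i) the no-aliasing geometry, namely confirming that the $\ell_p$-ball of radius $\pi$ really lies in a single cell of $2\pi\mathbb{Z}^d$ so that the Fourier-series Parseval step has no cross terms, which is handled by the norm inequality above; and (ii) making the moment claim fully rigorous despite $\norm{\bm{\omega}}_{\ell_p}$ being non-differentiable at the origin, where the high-order vanishing of $h$ forced by (III) is exactly what absorbs the singularity and keeps $\widehat{\phi}$ flat to order $N$.
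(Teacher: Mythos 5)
Your proof is correct and takes essentially the same approach as the paper: the paper's entire proof is the one-line remark that the result ``follows from a combination of Parseval's identity for Fourier transforms and Plancherel's identity for Fourier series,'' which is exactly the per-scale argument you carry out in detail, with the containment $\norm{\bm{\xi}}_{\ell_\infty}\leq\norm{\bm{\xi}}_{\ell_p}$ supplying the no-aliasing step and condition II collapsing the sum over scales. Your reading of $\phi_{q,\bm{k}}$ in the standard dyadically normalized form is the intended one (the paper's displayed formula omits the dilation factor), and your handling of the vanishing-moments claim, including the non-differentiability of $\norm{\bm{\omega}}_{\ell_p}$ at the origin, is more careful than anything the paper provides.
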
 
\begin{proof}
This follows from a combination of Parseval's identity for Fourier transforms
and Plancherel's identity for Fourier series.
\end{proof}

%{There are various types of wavelets satisfying Proposition \ref{prop:isotropic}. For the construction of scalable tight wavelet frames, we could use any of those. However, not all the frames are identical from an application point of view. When choosing the window function for the scaling of the multipliers, we have to consider two main issues. First, we would like to have an overlapping wavelet scheme, such that the transition between the dyadic scales are ''continuous''. Second, we would like to preserve the shape of the multipliers. }

In Section \ref{sec:freq_loc}, we use Meyer-type wavelets, constructed using the techniques of \cite{ward13_rd}.
% %
% \begin{figure}[h] 
%   \centering
%   %   \includegraphics[width=6cm]{fig/Simoncelli} 
%   \includegraphics[width=6cm]{fig/meyer_wave}
%   \caption[Meyer-type wavelet]
%    {Plot of $h_\epsilon$ approximating
%    $(2\pi)^{-d/2}2^{-1/2}\chi_{[\pi/4,\pi]}$.
%    }
%   \label{fig:meyer_wave}
% \end{figure}
% %
We want our final wavelets
to take the shape of the multipliers in an admissible family. Therefore,
we would ideally like to use a primal wavelet where $h$ is a constant
multiple of the characteristic function of $[\pi/4,\pi]$. However, discontinuities
in the Fourier domain correspond to slow decay in the spatial domain.  Therefore, as a
tradeoff, we propose a profile $h_\epsilon$ that is a smooth approximation to
the characteristic function of $[\pi/4,\pi]$, where $\epsilon$ is an
approximation parameter. For this construction, we define a smooth, non-decreasing
function $G$ that satisfies
$G(\gamma)=0$ for $\gamma<-1$ and $G(\gamma)=\pi/2$ for $\gamma>1$. Then, for an
approximation parameter $\epsilon$, we define
\begin{equation}\label{eq:mey_wave_01}
H_\epsilon(\gamma) 
=
G\left(\frac{\gamma+1}{\epsilon}\right)-\frac{\pi}{2}+G\left(\frac{\gamma-1}{\epsilon}\right)
\end{equation}
and 
\begin{equation}\label{eq:mey_wave_02}
h_\epsilon(\gamma) 
=
2^{-1/2}\cos
\left(H_\epsilon\left(\log\left(\frac{2^{1+\epsilon}}{\pi}\gamma\right)\right)
\right).
\end{equation}
For 
\begin{equation}\label{eq:mey_wave_03}
G(\gamma)
=
\begin{cases}
0,& \gamma<-1 \\
\frac{35\pi}{64}\left(\frac{-1}{7}\gamma^7+\frac{3}{5}\gamma^5-\gamma^3+\gamma+\frac{16}{35}\right),&
-1\leq \gamma <1\\
\frac{\pi}{2}, & \gamma\geq 1,
\end{cases}
\end{equation}
the resulting function $h_{\epsilon}$ and a dilated version are plotted in Figure
\ref{fig:mother_wave_01}.

%{We note that it is also possible to generate an overlapping wavelet scheme from the classical wavelets, however we stick to our construction due to the large non deforming zone it provides. }

% \begin{equation}
% h(\rho) = 
% \begin{cases}
% \sin\left(\frac{\pi}{2}\nu\left(\frac{4\rho}{\pi}-1\right)\right), &
% \frac{\pi}{4} < \rho \leq \frac{\pi}{2}\\
% \cos\left( \frac{\pi}{2}\nu\left(\frac{2\rho}{\pi}-1\right)\right), & \frac{\pi}{2} < \rho \leq \pi\\
%       0, & \mathrm{otherwise}\\
% \end{cases}
% \end{equation}
% For the Meyer wavelet of order $N$, the auxiliary function
%  $\nu(t)$ is a polynomial of degree $2N + 1$, which is chosen such that:
%  $\nu(t) = 0$,  if $t \leq 0$, $\nu(t) = 0$, if $t \geq 1$, and $\nu(t) + \nu
%  (1 - t) = 1$, while $\nu \in C^N([0, 1])$. E.g., the auxiliary function that
%  achieves a frequency response with $N = 3$ continuous derivatives is $\nu(t) =
%  t^4(35 - 84t + 70t^2 - 20t^3)$.
% 

\section{Localized Frequency}\label{sec:freq_loc}

In this section, we consider the wavelet frame defined by the Meyer-type mother
wavelet of Section 2 combined with the multipliers defined in Proposition
\ref{prop:trig_poly}.  We require the energy of the multipliers to be localized
within the support of the wavelets, so that scaling the multipliers will provide
a close approximation to scaling the wavelets.

\subsection{Detailed Example} 
\label{sec:freq_loc_example}

Here is an example of the proposed construction.
We define the radial mother wavelet
$\phi$ in the Fourier domain by
$\widehat{\phi}(\bm{\omega})=h_{\epsilon}(\rho)$ where $h_\epsilon$ is
given as in \eqref{eq:mey_wave_01}, \eqref{eq:mey_wave_02}, and
\eqref{eq:mey_wave_03}.
Fourier-domain representations of $h_{\epsilon}(\rho)$ and
$h_{\epsilon}(\rho/2)$ are shown in Figure \ref{fig:mother_wave_01}.
\begin{figure}[t!]
  \centering
  \includegraphics[width=6cm]{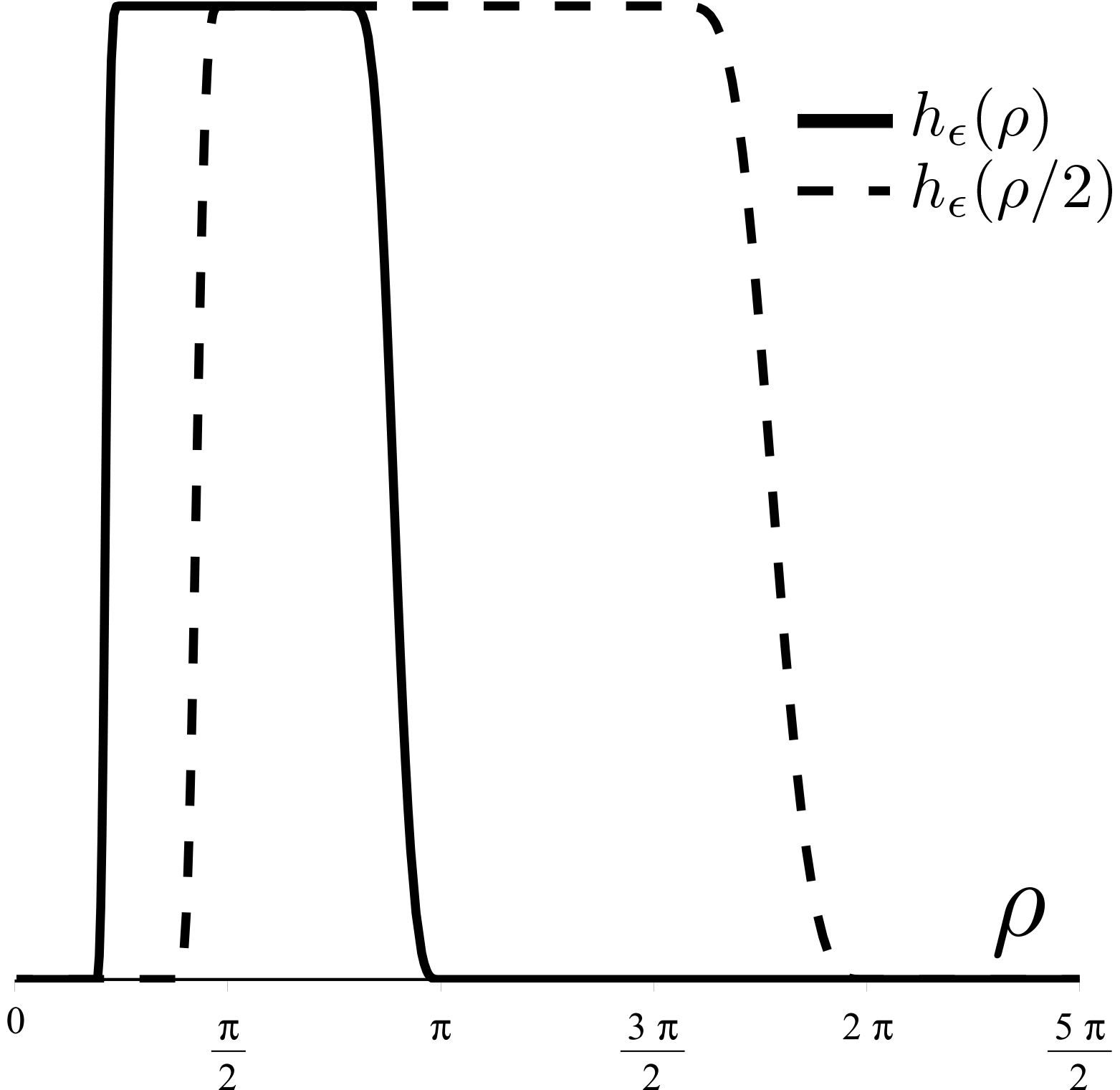}
  \caption[Mother wavelets]
   {Plot of $h_{\epsilon}(\rho)$ and $h_{\epsilon}(\rho/2)$ for the Meyer
   wavelet system.}
  \label{fig:mother_wave_01}
\end{figure}
We extend this wavelet frame using the admissible collection of Proposition
\ref{prop:trig_poly} with ${n_{\text{max}}}=2{l_{\text{max}}}+1$. 
The trigonometric polynomial $m$ is defined by the vector 
\begin{equation}
\bm{\alpha} 
=
\frac{\sqrt{4685}}{14055}
\left(
\begin{matrix}
125, & 101\sqrt{2},  & 53\sqrt{2}, & 16\sqrt{2}, & 2\sqrt{2}
\end{matrix}
\right),
\label{eq:alphaval}
\end{equation}
which is determined by sampling a polynomial B-spline of degree $3$. It is shown in Figure \ref{im:multiplier}.
We note that the choice of the multiplier depends on the application. The one used in this paper corresponds to a wavelet that has good localization in the Fourier domain. 
For $\bm{\alpha}$ defined in this way, we see that
the collection is admissible by Theorem
\ref{prop:trig_poly}, and the extended wavelet frame is tight.

\begin{figure}[t!]
  \centering
  \includegraphics[width=6cm]{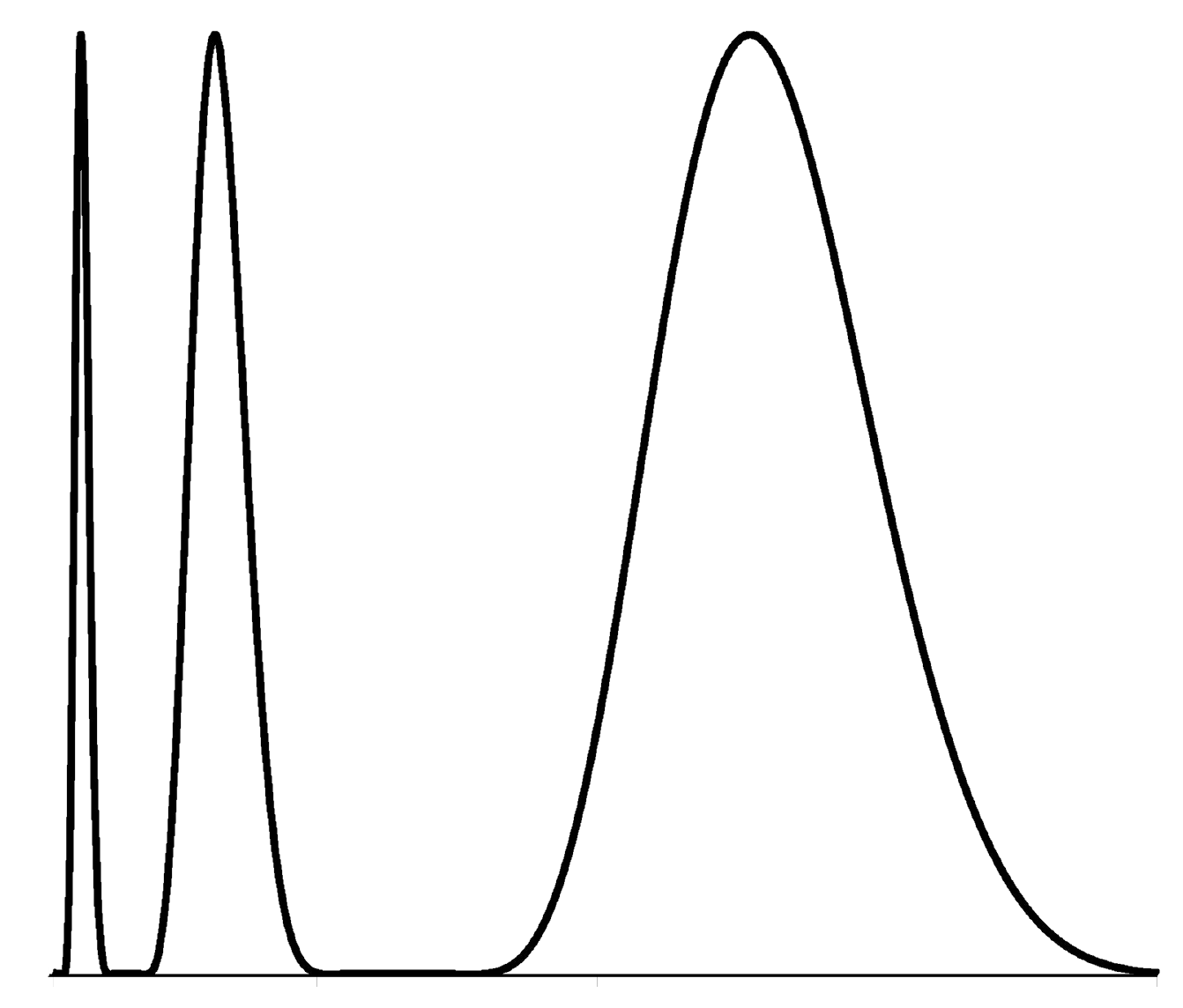}
  \caption[Multiplier]
   {Plot of the Fourier multiplier $m \left(\log_2(\cdot) \right)$ on the interval $[ \pi / 32, 2\pi ]$.}
   \label{im:multiplier}
\end{figure}

\subsection{Pseudo-Scaling} 

Our construction allows us to perform pseudo-dilations of the wavelets
within the extended tight frame of Section \ref{sec:freq_loc_example}.
To explain this procedure, we define the scaled multipliers
\begin{equation}
\mathcal{M}^a 
:=
\left\{m \left( \log_2 \left(\rho_n\abs{a\cdot} \right)
\right) \right\}_{n=1}^{n_{\text{max}}}. 
\end{equation}
We recall that $h_{\epsilon}$ is the Fourier profile of the primal mother wavelet $\phi$.
For a given parameter 
\begin{equation}
0<\epsilon'<\frac{\pi}{2}\left( 1-\frac{2}{4^{1+\epsilon}}  \right),
\end{equation}
we define the interval
\begin{equation}\label{eq:pseudo_scale_interval}
(4^{-1-\epsilon}\pi+\epsilon', 2(4^{-1-\epsilon}\pi+\epsilon')]
\end{equation}
within the support $[4^{-1-\epsilon}\pi,\pi]$
of the profile $h_{\epsilon}$. Furthermore, note that the $2^q$ dilations of this
interval are non-overlapping.

Here, we formally define the operation of pseudo-scaling wavelets as
scaling of the corresponding multipliers.  The pseudo-scaled wavelet system is
formed by applying $\mathcal{M}^a$ to the primal-wavelet system. Within this
system, we can very closely approximate true scaling. We consider the multiplier defined in Section \ref{sec:freq_loc_example}.  

Let $M_0$ be an element of $\mathcal{M}^1$ such that the profile of $\widehat{\phi}M_0$ attains
its maximal value in the interval \eqref{eq:pseudo_scale_interval}. Let
 $p_0$ denote the location of the maximum, and define $p(a)=p_0/a$, as illustrated in Figure
 \ref{fig:pseudo_dil}. We define the wavelet $\psi:=\mathcal{F}^{-1}\{ M_0
 \widehat{\phi} \}$. The profile of the dilated version of the Fourier transform of $\psi$, $\widehat{\psi} \left( a^{-1}\cdot \right)$, attains a maximum at $p(a)$.  The
pseudo-dilated version of $\psi$ is defined as
\begin{equation}
\psi_a := \mathcal{F}^{-1} \left\{ M_0(a\cdot)
\widehat{\phi} \left( 2^{q_a} \cdot \right) \right\},
\end{equation}
where $q_a$ satisfies 
\begin{equation}
p(a)\in \left(2^{q_a} ( 4^{-1-\epsilon}\pi+\epsilon' ), 2^{q_a} 2(4^{-1-\epsilon}\pi+\epsilon') \right].
\end{equation}
In other words, $p(a)$ belongs to the $2^{q_a}$ dilation of the interval
\eqref{eq:pseudo_scale_interval}.  The Fourier transform of $\psi$ and
a sequence of pseudo-dilations are shown in Figure
\ref{fig:pseudo_dil}. Note in particular the transition as $p(a)$
crosses the point $2(4^{-1-\epsilon}\pi+\epsilon')$. The wavelets maintain their shape due to the overlapping primal windows.

We measure the correlation between the truly scaled $\psi$ and its pseudo-dilated counterpart
$\psi_a$  as
\begin{equation}
\varrho_{\psi}(a)
:=
\max
\left(\left\langle\frac{\psi_a}{\norm{\psi_a}_{L_2}}
,\frac{\psi(a\cdot)}{\norm{\psi}_{L_2}}\right\rangle,0\right).
\label{eq:qm}
\end{equation}

\begin{figure}[t!] 
  \centering
  \includegraphics[width=6cm]{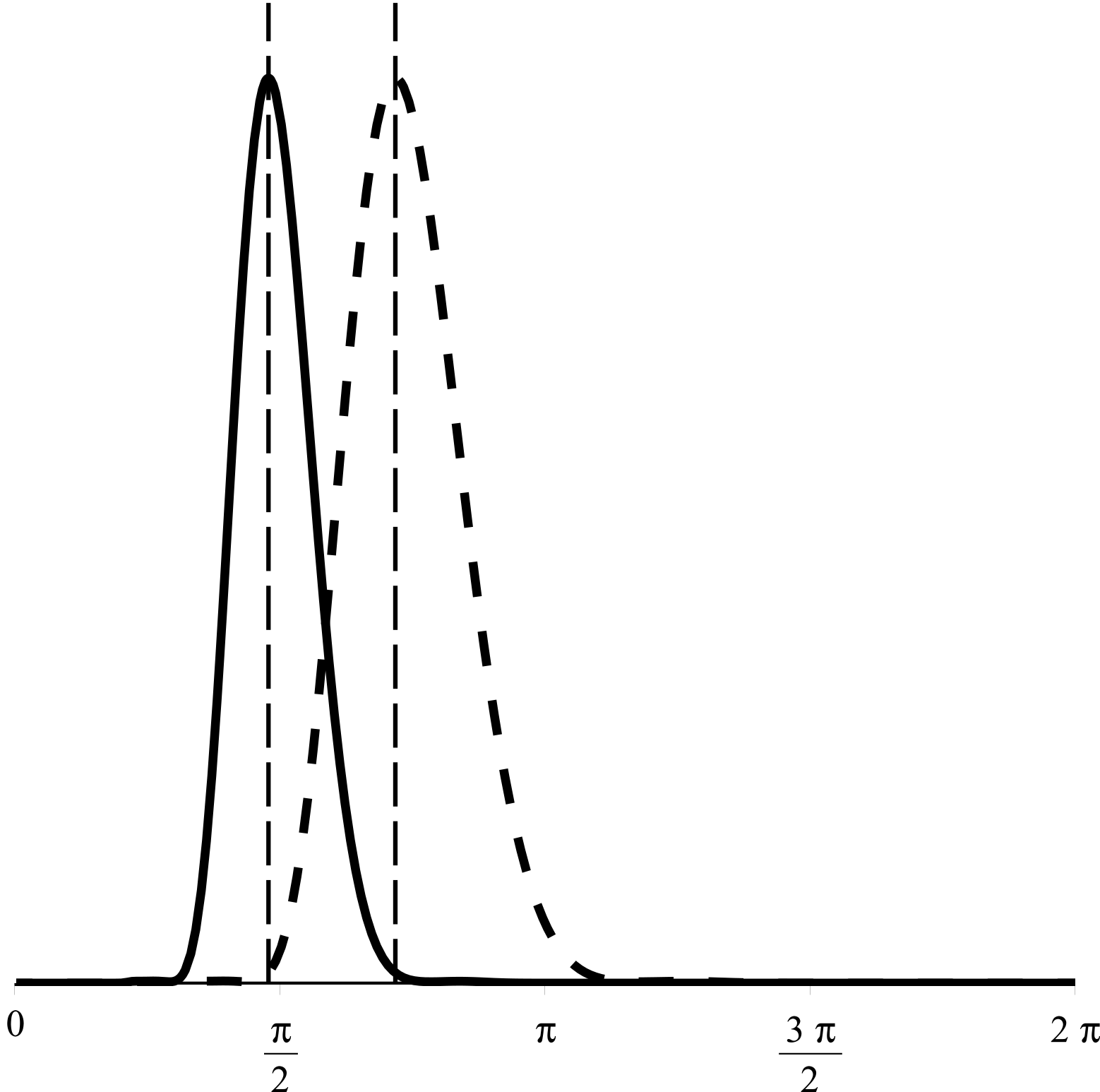} 
  \includegraphics[width=6cm]{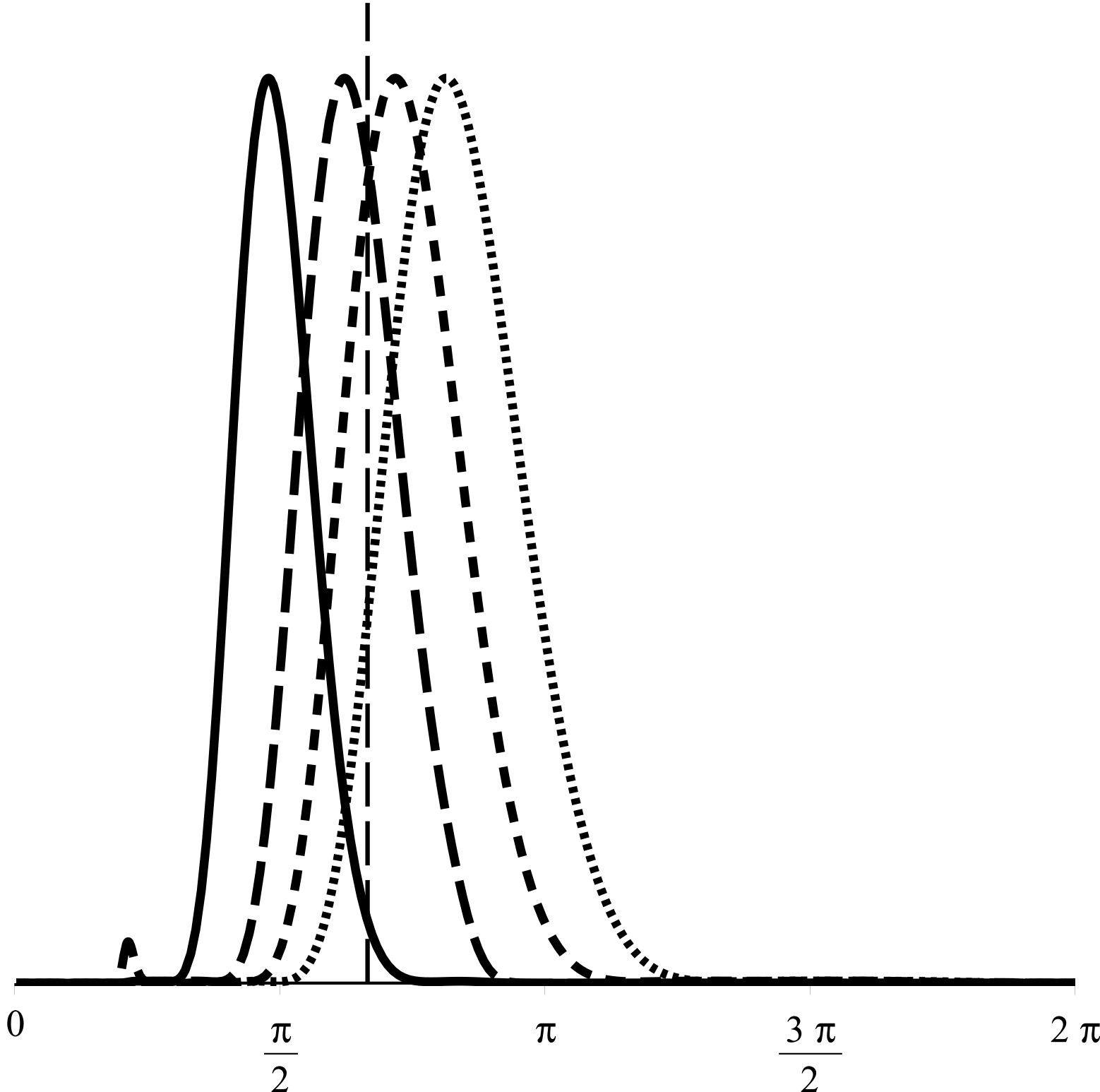} 
  \caption[Frame Element] 
   {Left: Profile plot of $\widehat{\psi}(a^{-1} \cdot)$ for $a=1.0$ (solid) and $a=1.5$
   (dotted). The location of the peak is $p(a)$, which is indicated by the dashed
   vertical line. Right: Profile plot of pseudo-dilations $\widehat{\psi}_a$, for
   $a=1,1.3,1.5,1.7$.  The dashed vertical line is
   located at $2(4^{-1-\epsilon}\pi+\epsilon')$.
   }
   \label{fig:pseudo_dil}
\end{figure}

 \begin{figure}[!t]
\centering
	\subfloat{\includegraphics[width=.68\textwidth]{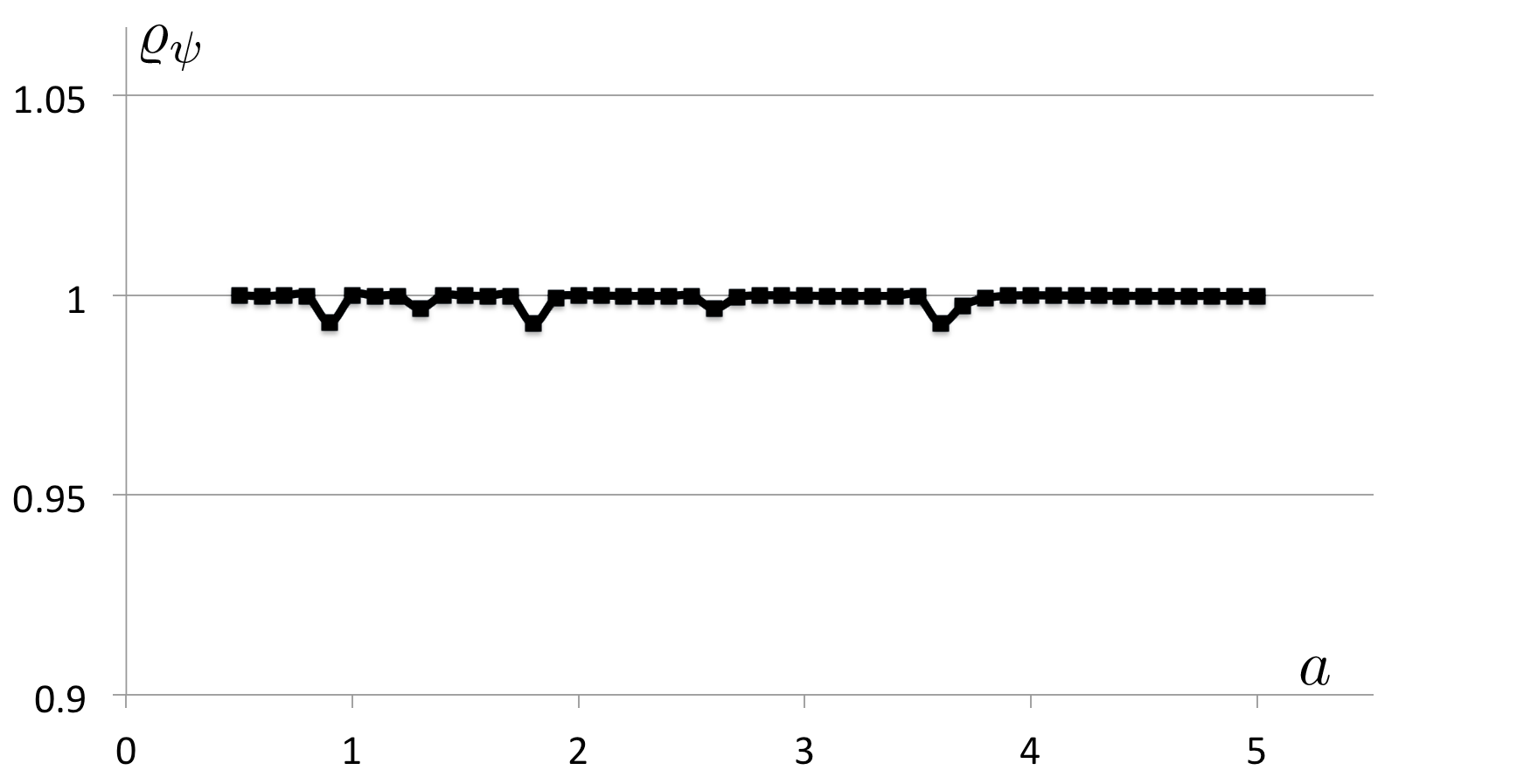}}\\
		\caption{Quality metric $\varrho_{\psi}$ as a function of the dilation $a$.}	
 	\label{im:quality}	
\end{figure}

% Note that pseudo-scaling
% operation closely resembles a true scaling of the wavelet function, as the
% largest portion of the multiplier stays away from the boundary of the support
% of the primal wavelets. 
This is similar to the
quality metric defined in \cite{r:chaudhury} to measure the approximation of
shiftable wavelets. Values close to one indicate that the pseudo-dilated
wavelets provide a close match to true dilation, while smaller values indicate a
poor match. We show in Figure \ref{im:quality} the evolution of $\varrho_{\psi}$ as a function of dilation.
There, we illustrate the quality of pseudo-scaling using the empirically defined value 
$\epsilon'=0.45$.  Due to the periodic
nature of pseudo-scaling, the minimum of $\varrho_{\psi}$ is found by analyzing a single period
of the trigonometric function. We compute this minimum to be 
$0.998$.

\subsection{Comments on the Construction}

In Proposition \ref{prop:trig_poly}, the periodicity of the trigonometric functions means
that the multipliers are periodic in scale.  For example,
\begin{equation}
\cos \left( \pi l \log_2(2^2 \cdot)\right) 
= 
\cos \left( \pi l \log_2( \cdot)\right),
\end{equation}
since $l$ is an integer.

Essentially, our tight-frame construction consists of the application of an
admissible collection $\{M_n\}$ to bandlimited wavelets $\phi$, where
$\widehat{\phi}$ is zero in a neighborhood of the origin. Therefore, the
supports of the scaled wavelets $\widehat{\phi}(a^q\cdot)$ segment the domain so that, 
for each scale, the support of $\widehat{\phi}(a^q\cdot)$ will cover a
finite number of scaling periods (of the form $[\rho,r\rho]$) of the
multipliers.
The fact that cyclic scaling can be combined naturally with a geometric
division of the Fourier domain motivates our interest in cyclically scalable
families for use with wavelets.  Also note that there should exist
synchronization between the cyclic periodicity and the discrete scales of the
wavelets, the usefulness being that one could get the rough discrete scale from
the wavelet hierarchy, the local scale being refined by
adapting the multipliers. Furthermore, we can scale the frame independently at each point, due
to the transformability property seen in Proposition \ref{prop:sc_form}.

\section{Special Case: One Complex Multiplier} 
\label{sec:two_channel}

In the localized-frequency construction, arbitrary localization can be
obtained if one is willing to work with a highly redundant wavelet frame. 
Here, to introduce the ideas, we provide details for the simplified case where the admissible
collection contains only two functions: sine and cosine. 

We propose to use the mother wavelets $\psi^{\cos}$ and $\psi^{\sin}$ that are defined by their Fourier domain profiles
\begin{align}
\widehat{\psi}^{\cos}( \bm{\omega}) 
&=  
h(| \bm{\omega} | )
\cos(\omega_0\log_2(\kappa | \bm{\omega} | ))\\
\widehat{\psi}^{\sin}( \bm{x}) 
&= 
h(| \bm{\omega} |)
\sin(\omega_0\log_2(\kappa| \bm{\omega} |)),
\end{align}
which have two parameters: $\kappa$ and $\omega_0$. The parameter $\kappa$
specifies a local shift of the cosine and sine under the window
$h$ in the frequency domain. The dependence of this frequency shift on
$\kappa$ is cyclic: all values of $\kappa$ that differ by $2^{n 2\pi/\omega_0}$
produce equivalent frequency shifts. The second parameter is $\omega_0$. It
determines the size of these local-frequency cycles as well as the number of
oscillations of cosine and sine that fall within the support of $h$.
In practical designs, one can tune these parameters to select the number of
oscillations ($\omega_0$) and the phase shift ($\kappa$) of the sinusoids within
the support of $h$. In \cite{ISBI2015}, the authors were using $\omega_0 = 4\pi$ and $\kappa=2^5 / \pi$. The function $h$ can be defined as
$2^{-1/2}h_{\epsilon}$, where $h_{\epsilon}$ is from
\eqref{eq:mey_wave_02}.

The two real mother wavelets are combined in the single complex wavelet
\begin{align}
	\psi(\bm{x}) 
	&= 
	\psi^{\cos}(\bm{x}) + \uj \psi^{\sin}(\bm{x}) \\
	&= 
	\mathcal{F}^{-1}\left\{h(|\cdot|)
	\ue^{\uj \omega_0\log_2(\kappa |\cdot|)}\right\}(\bm{x}) .
\end{align}
The polar form of the analysis coefficients of the complex wavelets $\psi_{s,\V
k}$ is 
\begin{equation}\label{eq:polar_wave_coef}
A_{s, \bm{k}}\ue^{\uj \beta_{s, \bm{k}}} 
= 
\left\langle f, \psi_{s, \bm{k}}
\right\rangle,
\end{equation}
where $s\in\mathbb{Z}$ denotes the scale and $\bm{k}\in\mathbb{Z}^d$ the
position. 

We note the following scaling
relationship between $\psi$ and $\psi^{\cos}$:
\begin{align}
\Re\left(\ue^{\uj\beta} \psi\right) 
&= 
\cos(\beta)\psi^{\cos} -
\sin(\beta)\psi^{\sin}\\
&= 
\mathcal{F}^{-1}\left\{  h(|\cdot|) \cos(\omega_0
\log_2(2^{\beta/\omega_0}\kappa |\cdot|)) \right\}.
\end{align}
A similar relationship holds between $\psi$, $\psi^{\cos}$, and $\psi^{\sin}$ upon
taking the imaginary part. When analyzing a real-valued function $f$, we use
this property to scale the complex wavelets. We arrive at an expansion of $f$ as
a sum of locally scaled versions of $\psi^{\cos}$, where $\psi^{\cos}$ is adapted to the signal at each scale $s$
and position $\bm{k}$ as in 
\begin{align}
f 
& =
\sum_{s, \bm{k}} \left\langle f, \psi_{s, \bm{k}}
\right\rangle \psi_{s, \bm{k}} \\
& = 
\sum_{s, \bm{k}} \left( A_{s, \bm{k}}  \ue^{\uj\beta_{s, \bm{k}}}
\right) \psi_{s, \bm{k}} \\
& =
 \sum_{s, \bm{k}} A_{s, \bm{k}} \left(\ue^{\uj\beta_{s, \bm{k}}}
\psi_{s, \bm{k}} \right) \\
& = 
\sum_{s, \bm{k}} A_{s, \bm{k}}
\mathcal{F}^{-1}
\left\{ 2^{sd}\ue^{-\uj 2^s\bm{k}\cdot\bm{\omega}} 
h(2^s |\cdot |) 
\cos
\left(\omega_0 \log_2 \left(2^{\beta_{s,k}/\omega_0}\kappa |\cdot| \right)\right) \right\}.
\end{align}

\begin{definition}
Let $\left\{\psi_{s,\bm{k}}\right\}$ be a wavelet system as defined above. For a
function $f:\mathbb{R}^d\rightarrow \mathbb{R}$, let the wavelet coefficients be
defined as in \eqref{eq:polar_wave_coef}. Then, we define the scale-adapted
wavelets $\psi_{s,\bm{k}}^f$ by
\begin{align}
\psi_{s,\bm{k}}^f 
&=  
\mathcal{F}^{-1}\left\{2^{s d}\ue^{-\uj2^s\bm{k}\cdot\bm{\omega}}  
h(2^s |\cdot|) 
\cos
\left(\omega_0 \log_2 \left(\kappa_{s,\bm{k}} |\cdot| \right)\right) \right\} \\
\kappa_{s,\bm{k}}
&=
2^{\beta_{s,\bm{k}}/\omega_0}\kappa,
\end{align}
for $0\leq \beta_{s,\bm{k}}<2\pi$.
\end{definition}

Using the scale-adapted wavelets, we view $A_{s,\bm{k}}$ as the coefficient associated to the
analysis of $f$ with a version of $\psi^{\cos}$ that is optimally scaled locally.
Scaling $\psi_{s,\bm{k}}$ by $2^{(-\beta_{s,\bm{k}})/\omega_0}$
creates the best match between $f$ and a locally scaled version of $\psi^{\cos}$.

 \begin{figure}[!t]
\centering
	\subfloat{\includegraphics[width=.98\textwidth]{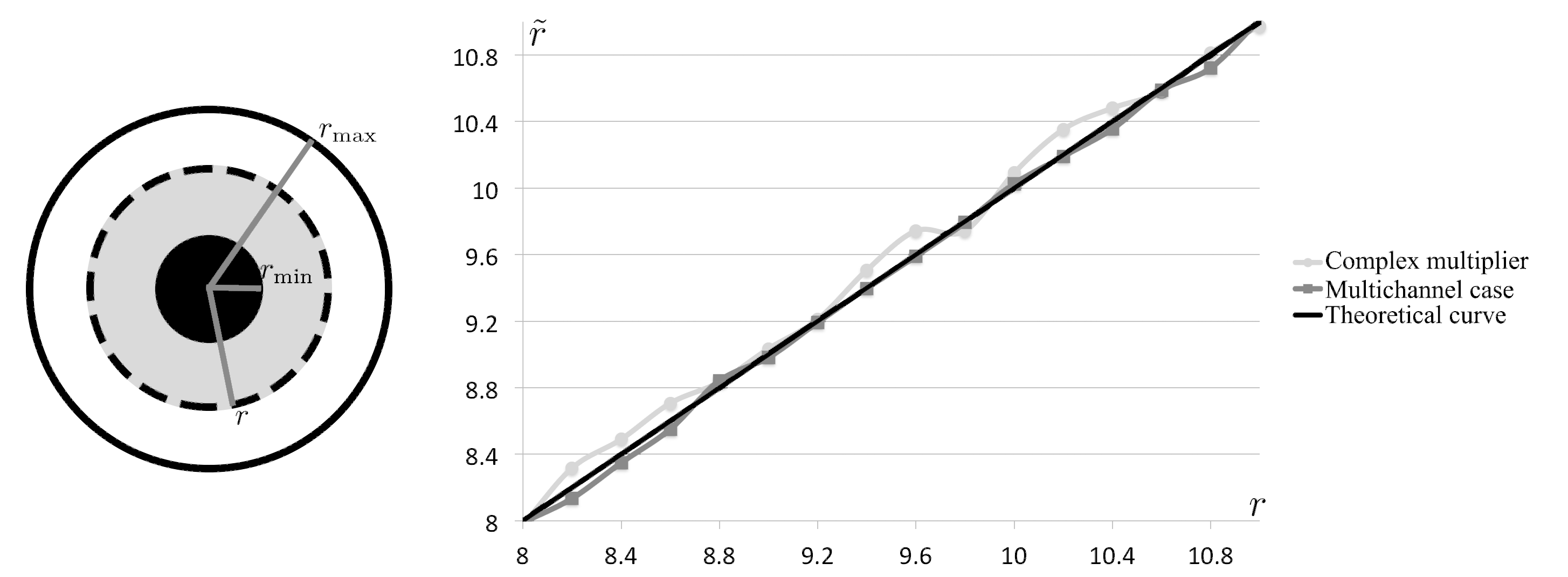}}\\
		\caption{Reference images (left) and radius estimations (right). The measured spot size as a function of the actual spot size. The straight black curve represents the theoretical values. The light grey curve shows the estimated radius extracted from the phase of the wavelet coefficients in the complex case. The dark grey curve visualizes the estimated radius determined by scaling the multipliers in the multichannel case.}	
 	\label{im:curve}	
\end{figure}

%The case of the one complex multiplier presented here has been already published in \cite{ISBI2015}. 
An application of the single complex multiplier case was presented in \cite{ISBI2015}.
To compare the estimation properties of the one complex multiplier with the more general multichannel case (with $\alpha$ corresponding to (\ref{eq:alphaval})), we generated a series of test images. The spots range in size from 8 pixels to 11 pixels with step size 0.2. The reference images (left) and the correspondence between the measured spot radius and the actual spot size is illustrated in Figure \ref{im:curve}. Since the multichannel case provides a better estimate of size (with a relatively small computational overhead), from now on we focus on that case.

\section{Spot Detection}
\label{sec:application}

We summarize the algorithm for the detection and size estimation of circular patterns that we call spots. We suppose that a scalable wavelet with the proper Fourier multipliers are at our disposal. 
\\
(1)  ({\bfseries Wavelet Analysis With Scalable Wavelets})   \\
We decompose the image with the scalable wavelet. At each location $\V k$ and dyadic scale $s$, we have $n_{\text{max}}$ channels, corresponding to the number of Fourier multipliers. The first stage outputs a map of wavelet coefficients $\{ w_n(s,\V k) \}_{n = 1}^{n_{\text{max}}}$.
\\
(2)  ({\bfseries Thresholding [Optional]  and Non-Maximum Suppression})   \\
We assume that the keypoints (center of spots) are sparse in the image. We thus ignore points where the response of the detector is small, by 
applying global threshold based on the $\ell_2$ norm of the wavelet coefficients $\{ w_n(s,\V k) \}_{n = 1}^{n_{\text{max}}}$ over the channels at each scale and location. We additionally apply local non-maximum suppression to prevent multiple detections of the same object.
\\
(3)  ({\bfseries Adaptation})   \\
We perform the adaptation step (steering of the scale) at each location that was retained by the previous step. The size of an object corresponds to a maxima in the response of the wavelet detector. The wavelets therefore have to be ``scaled'' to look for the precise scale that elicits the largest response. 
\\
(4)  ({\bfseries Postprocessing [Optional] })   \\
We rank the candidates based on a particular measure (e.g., strength of the response of the wavelet, contrast with respect to the neighboring background, SNR). We choose the best corresponding results.

\section{Experimental Results}
\label{sec:experiments}
Our algorithm to detect spots and measure their size has been programmed as a plug-in for the open-source image-processing software ImageJ \cite{Abramoff2004}. 
To evaluate the performance of the algorithm, we use a variety of test images, including synthetic ones and real micrographs.
The aim of our experiments is to measure the speed of our method, its accuracy, and its robustness against background signal.
We also want to compare our method to other popular spot-detection methods in the literature.

In the evaluation phase we use the Hungarian algorithm to match the detections with the nodes of the original grid. The detections are accepted if they are no further than 5 pixel from the original nodes. Otherwise, they are counted as false positives. To make a quantitative evaluation, we compute the Jacquard index and the root-mean-square error (RMSE) for the estimation of the position and radius. We note that the RMSE is computed for the matched detections only. 

\subsection{Synthetic Data}

There exist several approaches dedicated to the detection of circular objects and measuring their radii. We can separate these approaches into three main categories: classical global
methods (e.g., morphology and adaptive thresholding); methods based on the detection and analysis of edges and gradients (e.g., the circular Hough transform \cite{Hough1962} and active contours); and approaches based on filtering (e.g., Laplacian of Gaussian (LoG), determinant of Hessian (DoH), and wavelet-based techniques \cite{OlivoMarin20021989}). 

Global methods are mostly used for the evaluation of clear structures without background signal or noise. Their accuracy drops significantly for complex structures that appear in biological experiments, due to their sensitivity to noise. Edge-based methods are highly demanding in computational time and capacity. Filter-based methods include detectors with parametric templates that correspond to a specific range of sizes or scales. 

In \cite{Sage}, it was shown that the LoG filter can be likened to a whitened matched filter and offers optimal properties for detection in a broad category of images. In \cite{Lindeberg}, Lindeberg proposed a multiscale extension of the LoG-based detection scheme to overcome the limitation of the single scale. His method uses numerous passes of the LoG filter to capture the location and the size of spots. The computational efficiency of the algorithm highly depends on the diversity of the spot size. 

For the comparison, we have chosen the following methods: Multiscale Laplacian of Gaussian, circular Hough transform, and thresholding and connected components. 

\subsubsection{Speed}

\begin{figure}[!t] 
\centering
	\subfloat{\includegraphics[width=.58\textwidth]{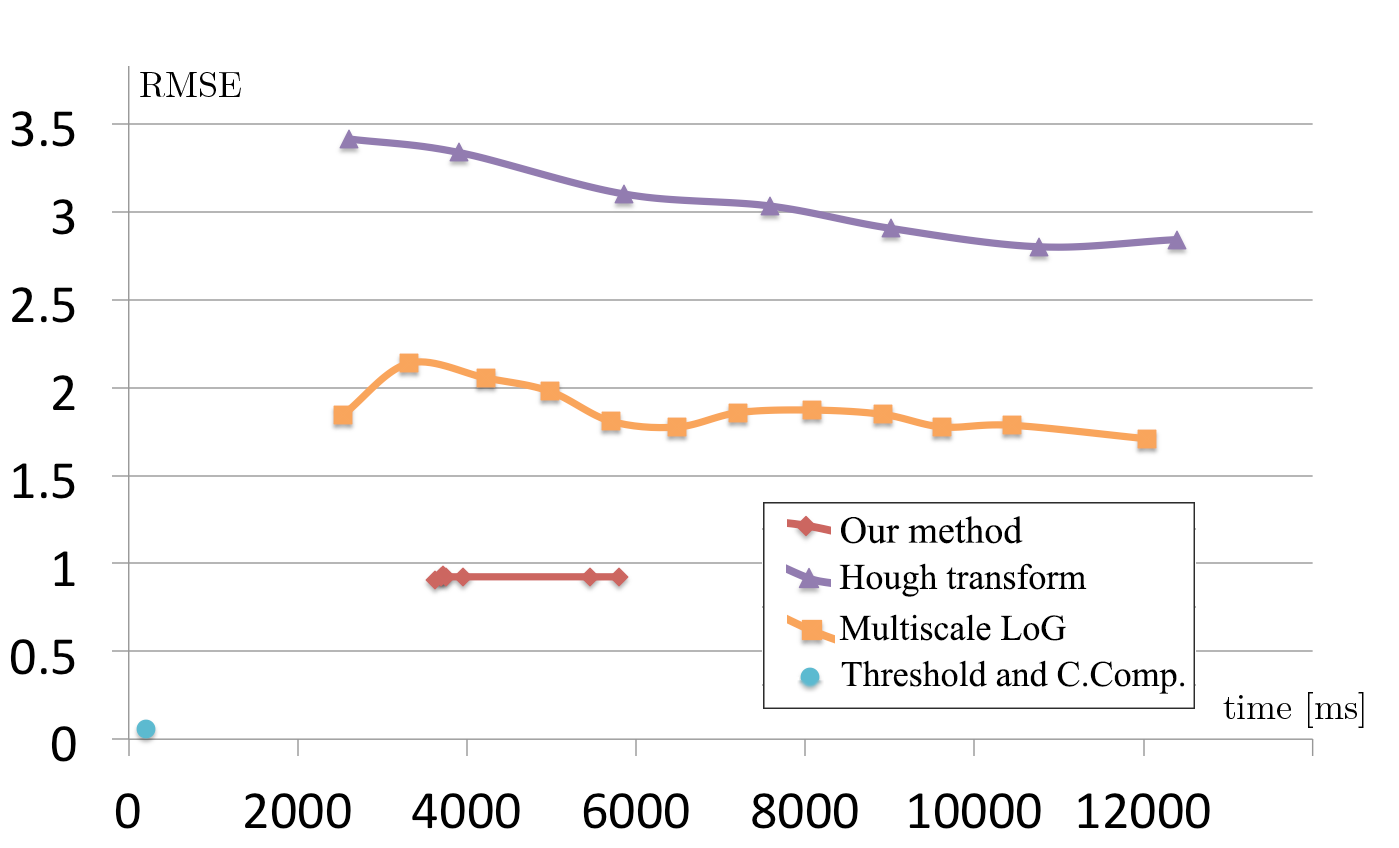}}
	\\
	\subfloat{\includegraphics[width=.58\textwidth]{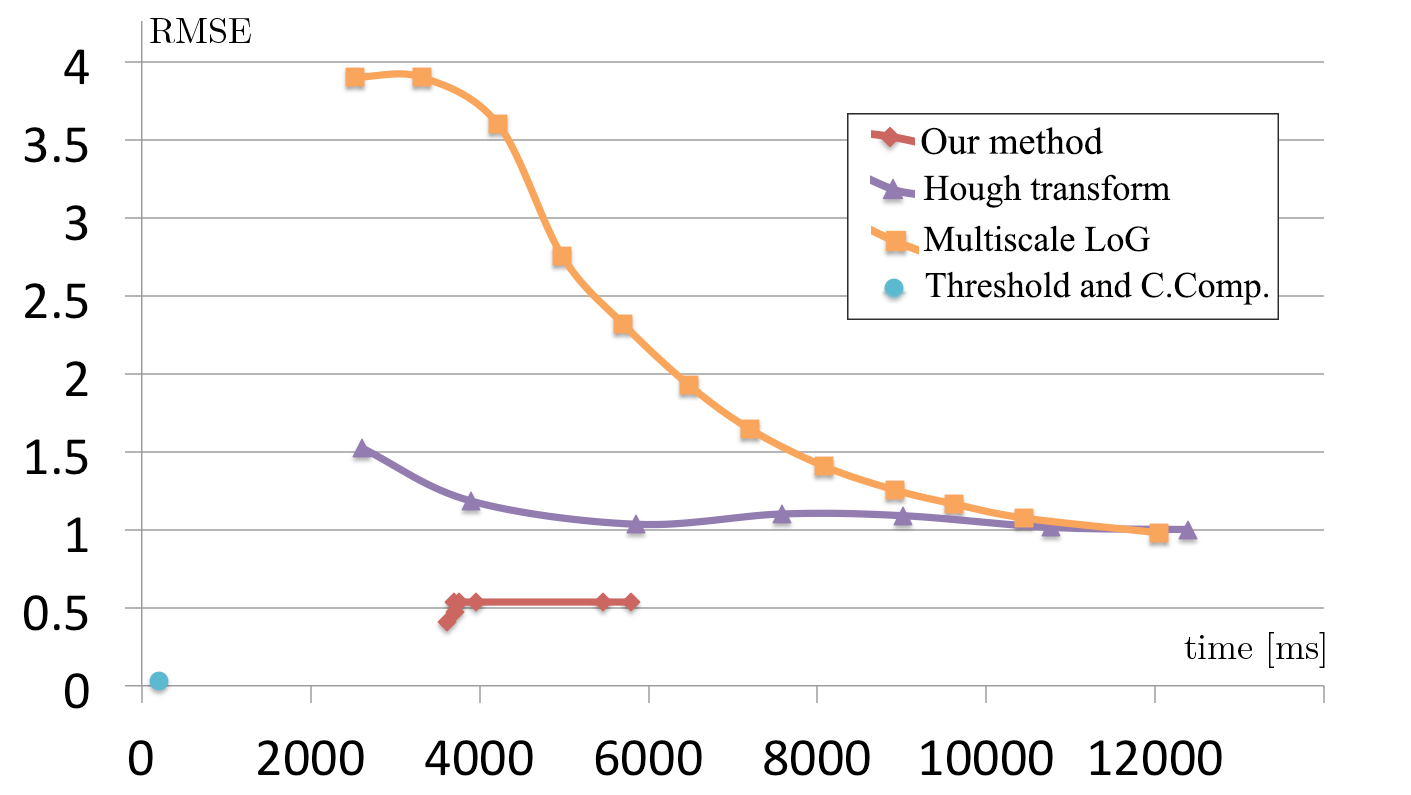}}
	\caption{Position and radius estimation error (in pixels) in a sense of root mean square error (RMSE) as a function of running time.}
	\label{im:speed}	
\end{figure}

First, we generate a series of test images (of size $\left(1,\!000 \times 1,\!000 \right)$) where
we control the location of the spots and their radius. The ground-truth data contain 200 disks, with radii varying between 8 and 40 pixels. We allow overlap between neighboring spots, by at most 10 pixels.

In the case of the multiscale Laplacian of Gaussian and the circular Hough transform, the parameters determine the precision of the algorithm (corresponding to their ``search space''). With generous settings they provide better results; however, their computation time increases dramatically. In the case of our method, we control the speed by modifying the global threshold for the wavelet coefficients (Step 2). The RMS errors of the estimation of position and radius as a function of running time are summarized in Figure \ref{im:speed}.  
We note that the application area of the method ``thresholding and connected components'' is limited to noise-free data. Based on the graphs, we conclude that our algorithm performs better for a given computation time than the competing methods. 

\subsubsection{Robustness Against Background Signal}
 
 \begin{figure}[!t]
\centering
	\subfloat{\includegraphics[width=.31\textwidth]{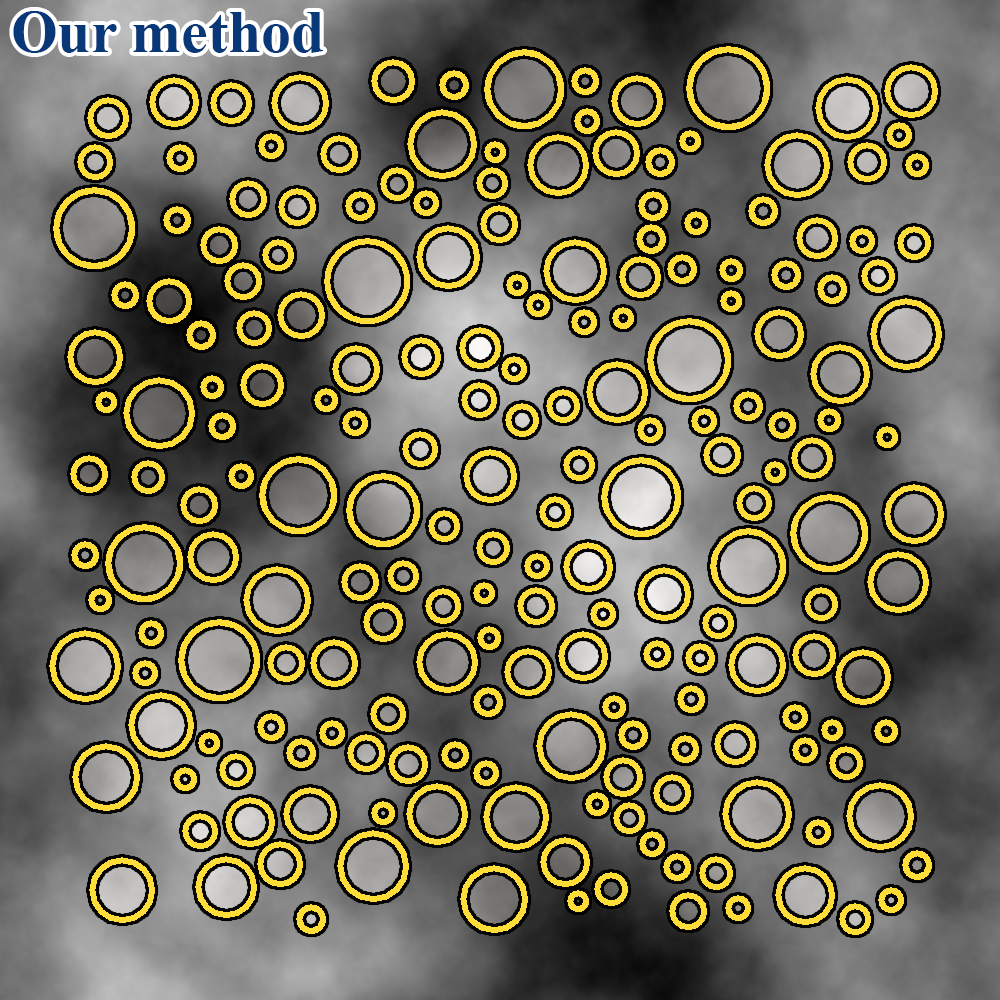}}
	\quad
	\subfloat{\includegraphics[width=.31\textwidth]{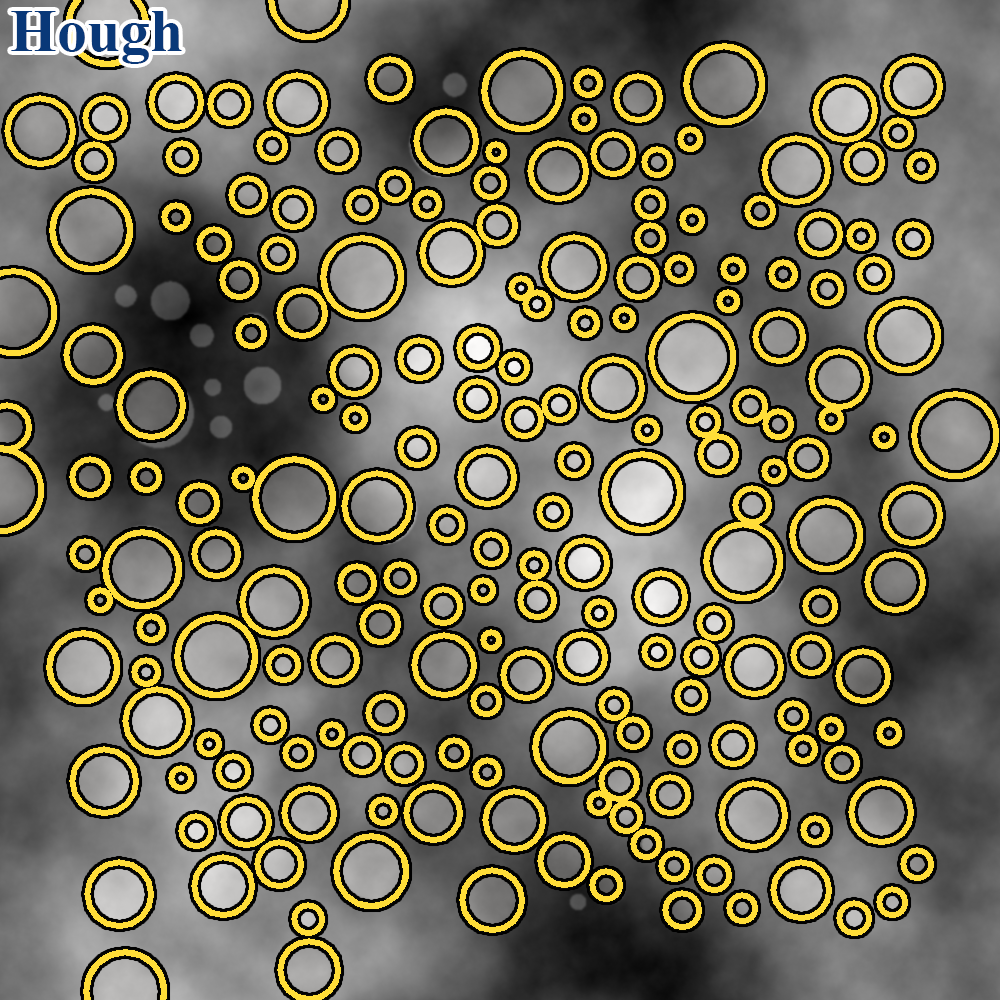}}
	\quad
	\subfloat{\includegraphics[width=.31\textwidth]{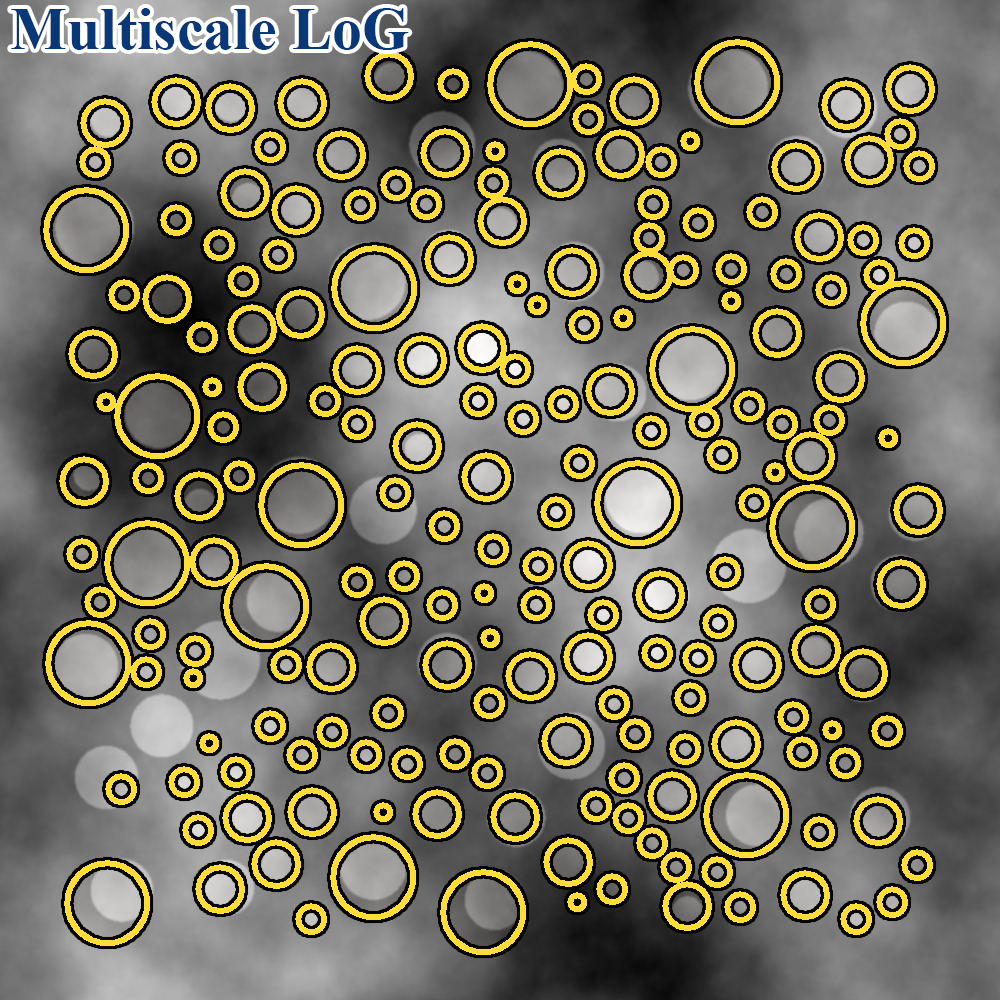}}\\
	\subfloat{\includegraphics[width=.31\textwidth]{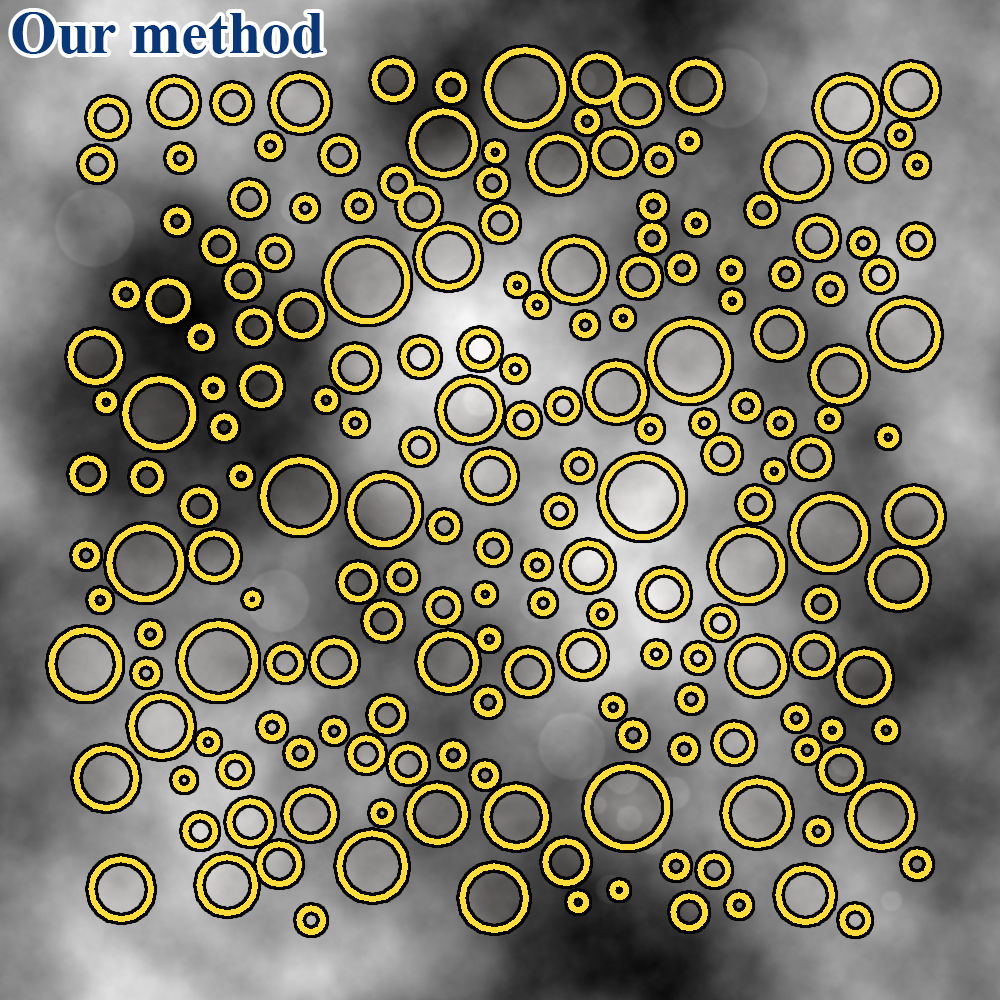}}
	\quad
	\subfloat{\includegraphics[width=.31\textwidth]{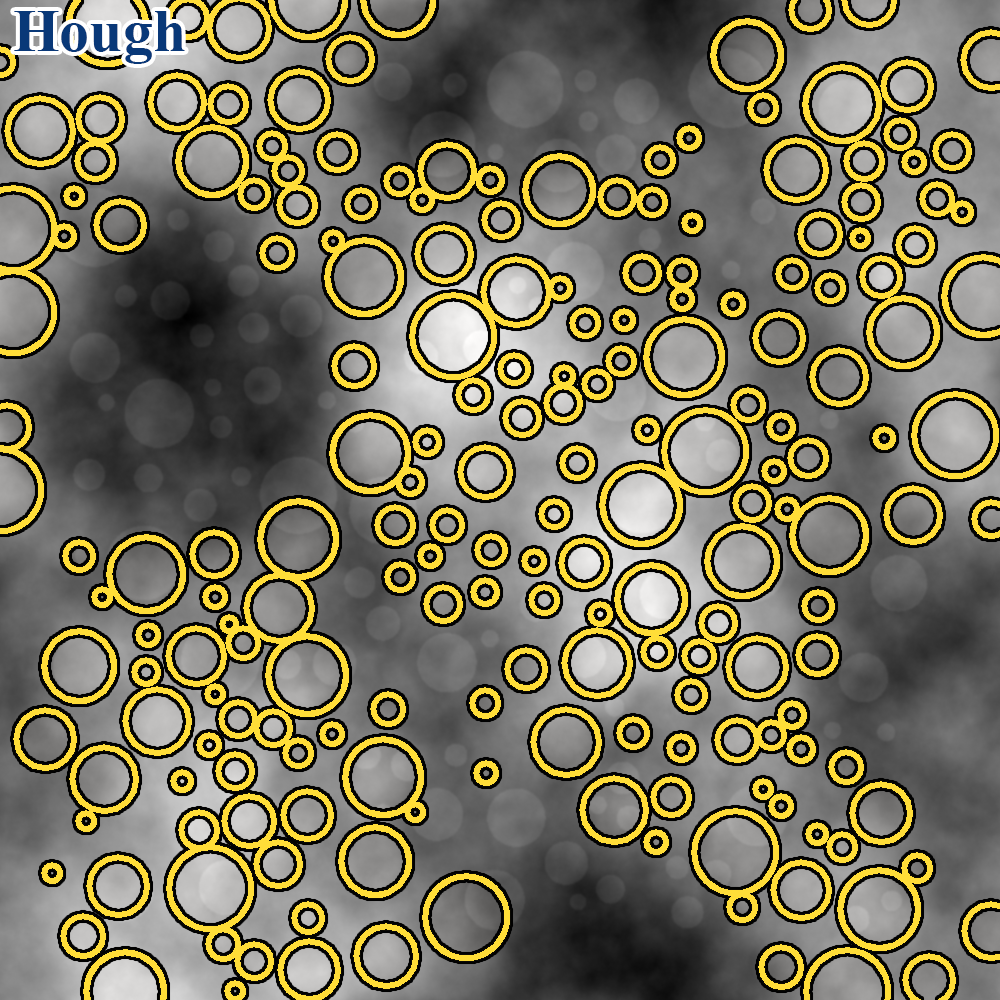}}
	\quad
	\subfloat{\includegraphics[width=.31\textwidth]{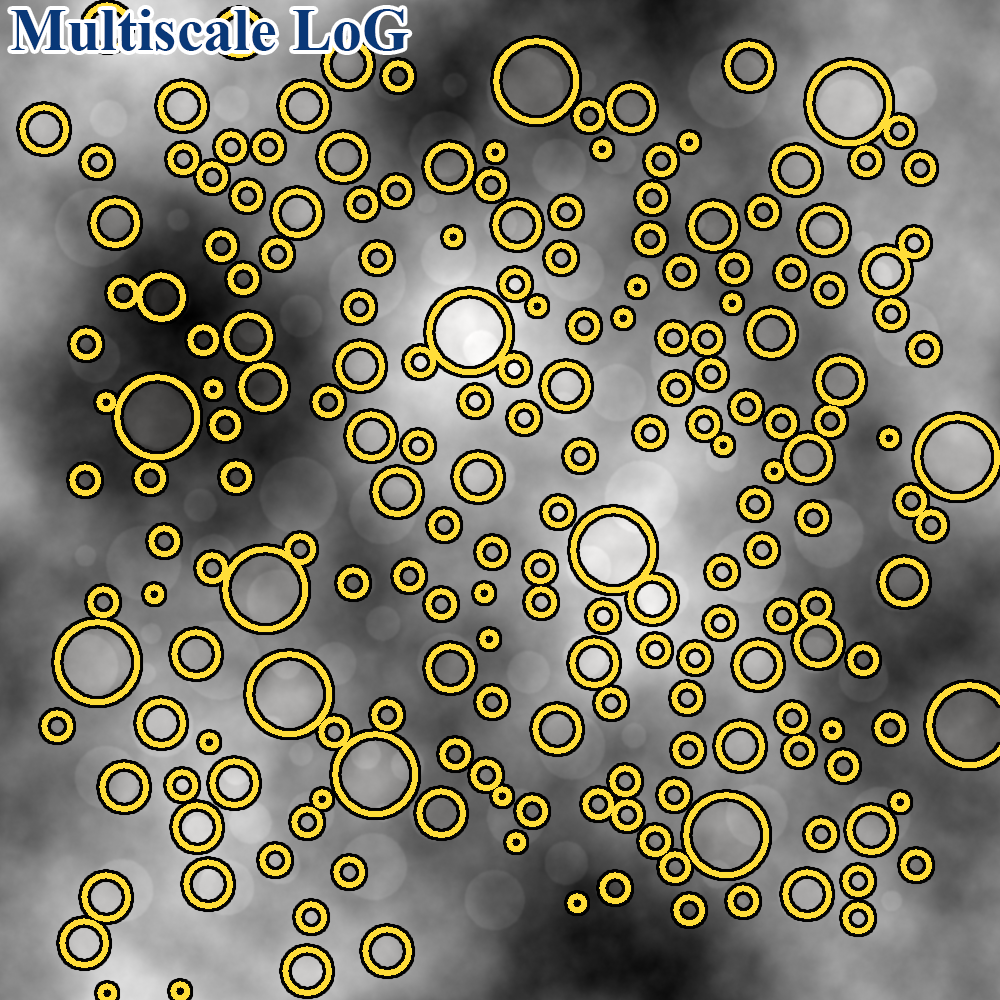}}\\
	\caption{From left to right and from top to bottom: detections on the test image corrupted with additive
isotropic Brownian motion (mean 0, standard deviation 2.0);  detections on the test image corrupted with additive isotropic Brownian motion (mean 0,
standard deviation 8.0).}	
	\label{im:series}	
\end{figure}
 \begin{figure}[!t]
\centering
	\subfloat{\includegraphics[width=.75\textwidth]{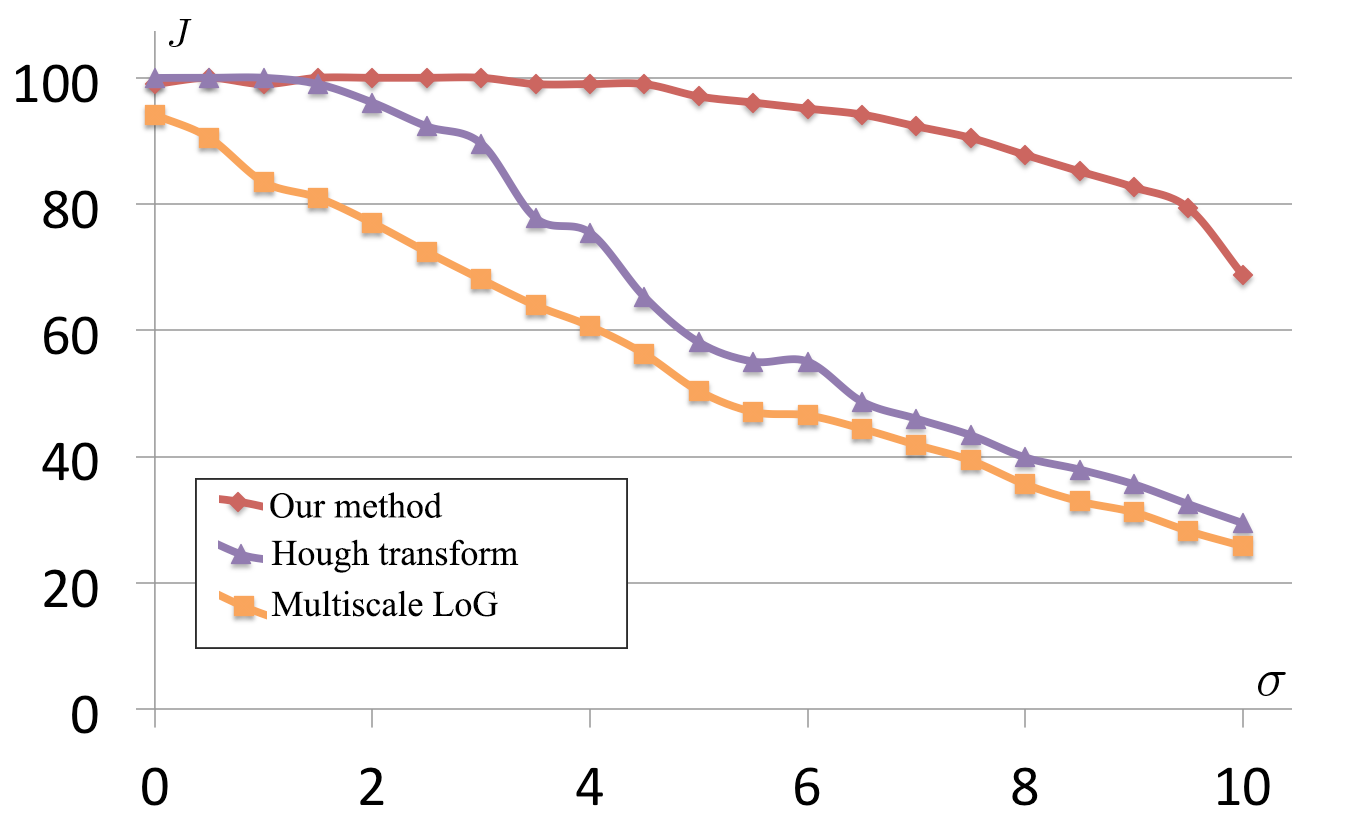}}\\
		\caption{Jaccard index under  isotropic Brownian
	motion (background signal), as a function of the standard
	deviation $\sigma$ of the noise.}	
 	\label{im:JaccardCloud}	
\end{figure}
 
\begin{figure}[!t]
\centering
	\subfloat{\includegraphics[width=.68\textwidth]{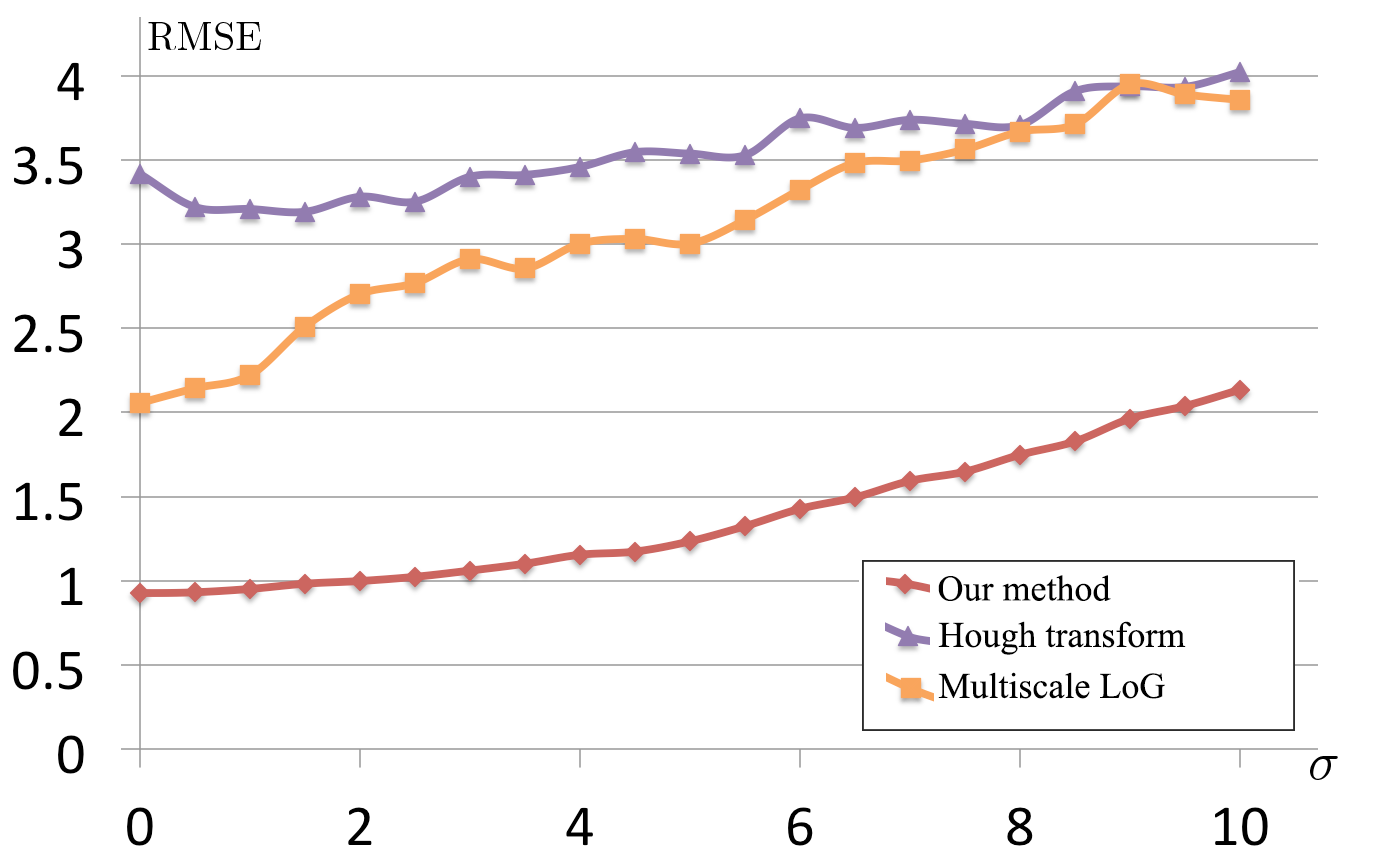}}
	\\
	\subfloat{\includegraphics[width=.68\textwidth]{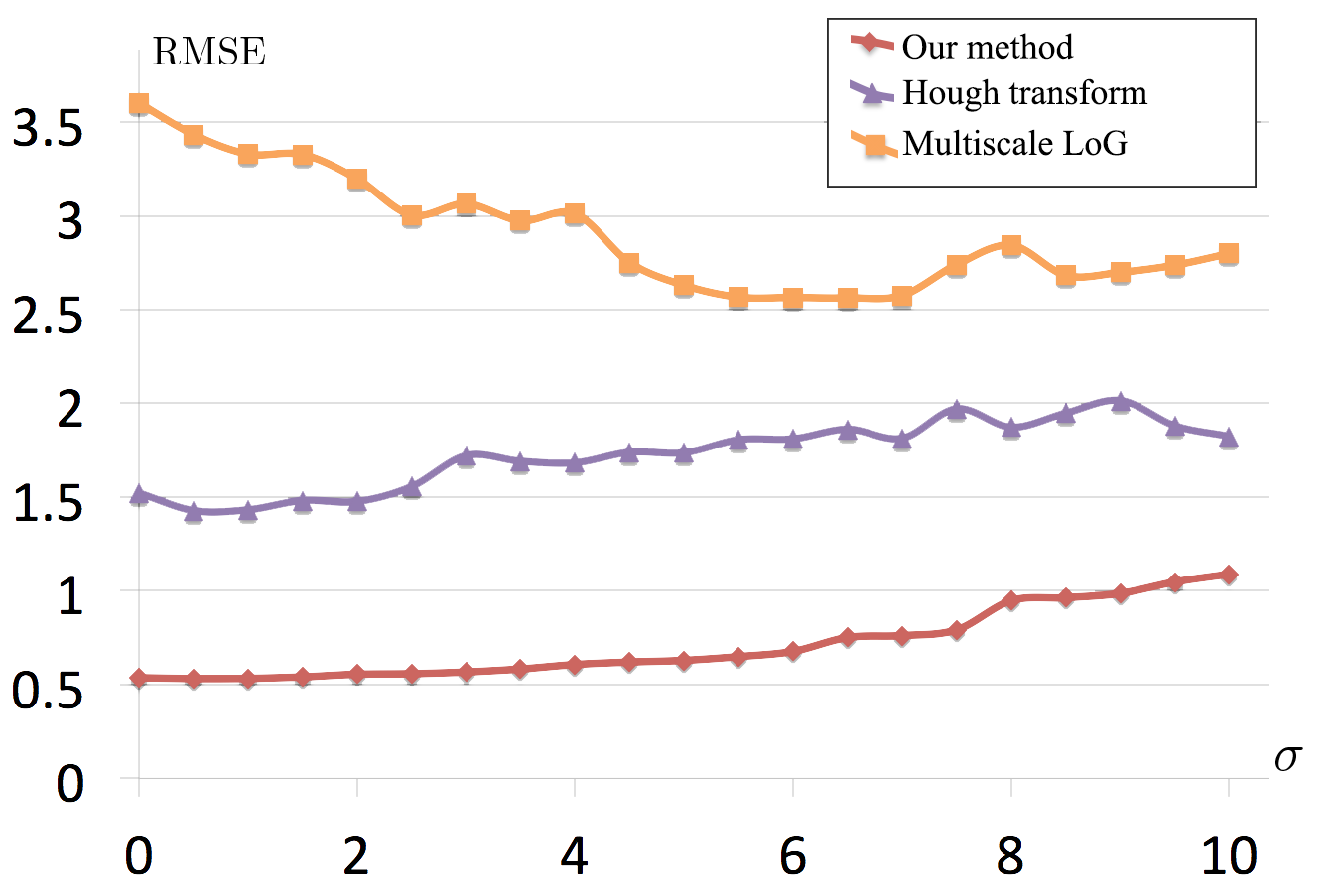}}\\
	\caption{Position and radius estimation error (in pixels) in a sense of root mean square error (RMSE) under  isotropic Brownian
	motion (background signal), as a function of background standard
	deviation ($\sigma$).}	
	\label{im:RMSECloud}	
\end{figure}

In fluorescence microscopy, the presence of background signal (autofluorescence) is typical. We test the robustness of our algorithm to this phenomena. To generate our test images, we use the results of Sage et al. \cite{Sage}, who gave an experimental validation on the spectral power density of fluorescence-microscopy images, claiming that it is isotropic. The corresponding fractional Brownian motion (fBm) is described in \cite{Vehel}. Thus, we represent the power density function by $\|\omega\|^{-s}$, where $\omega$ is the radial spatial frequency and $s$ is the fractal exponent. 

As in the previous case, we generate a series of test images (of size $\left(1,\!000 \times 1,\!000 \right)$), where
we control the location of the spots and their radii. We intend to detect different spots, with radii varying between 8 and 40. We allow overlap between neighboring spots, by at most 10 pixels. To make the detection more challenging, we add some isotropic background signal (fBm), with a mean of zero and a standard deviation ranging from 0 to 10. An illustration of the test images can be seen in Figure \ref{im:series}, along with particular results. We set the running time of the corresponding algorithms to the same order of magnitude (favoring the competing ones). The Jaccard index is presented in Figure \ref{im:JaccardCloud}, and the RMS errors in Figure \ref{im:RMSECloud}. 
Based on the graphs, we confirm that our algorithm has a clear advantage in terms of accuracy in the presence of significant background. 

\subsection{Biological Data}
\begin{figure}[!t]
\centering
	\subfloat{\includegraphics[width=.48\textwidth]{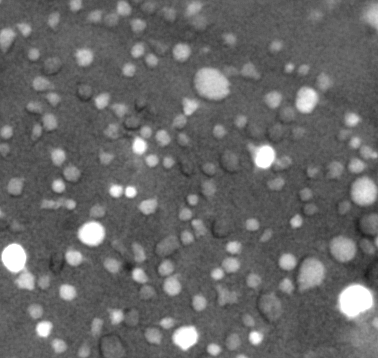}}
	\quad
	\subfloat{\includegraphics[width=.48\textwidth]{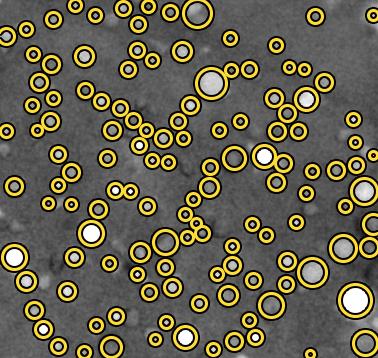}}\\
	\caption{From left to right: cells in fluorescence microscopy; corresponding detections.}	
	\label{im:fluo}	
\end{figure}
\begin{figure}[!t]
\centering
	\subfloat{\includegraphics[width=.48\textwidth]{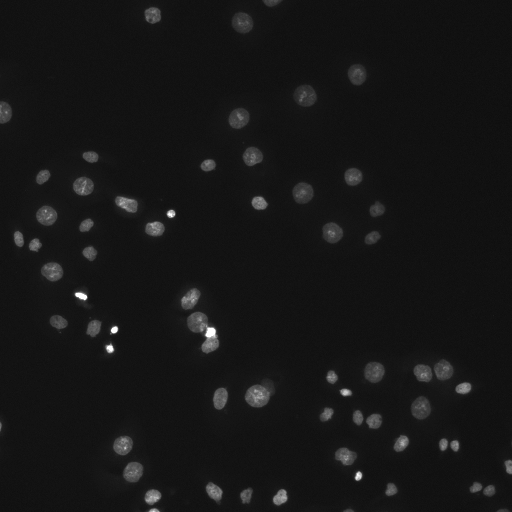}}
	\quad
	\subfloat{\includegraphics[width=.48\textwidth]{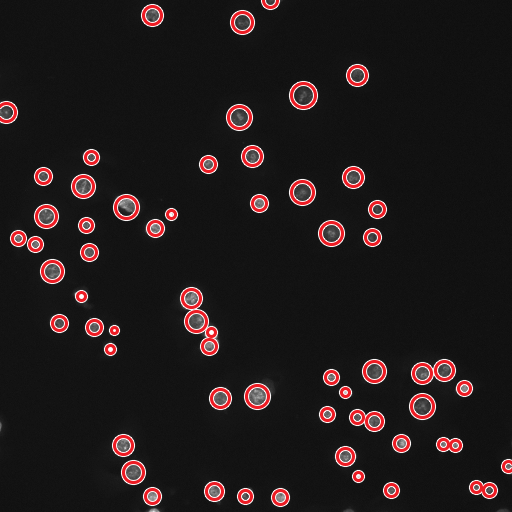}}\\
	\caption{From left to right: human colon-cancer cells; corresponding detections.}	
	\label{im:cancer}	
\end{figure}

%Most medical and biological applications relying on cell cultures (e.g. bloodwork) seek concentration information. The amount of blood cells (red or white) per unit of volume provides information on the general health of the patient; its abnormality can indicate the presence of specific diseases. For clinical purposes, the concentration of bacteria and pathogens is also frequently measured. The counting of cells  for image statistics has a fundamental role in molecular biology as well. Cell statistics provide information on the the growth rate of microorganisms (that correspond to their division to new cells), or it can be used to determine how to adjust some reagents. For clinical purposes, the accurate characterization of circular structures plays a crucial role as well. For example, in cancer research, the segmentation, identification, and precise cell count of cancer cells is fundamental.  

As a final illustration, we present results on two biological micrographs. Figure \ref{im:fluo} features cells in fluorescence microscopy. The detections and radius measurements are accurate, despite the heavy background signal and the fact that the intensity of the cells are varying. 
Figure  \ref{im:cancer} visualizes human HT29 colon-cancer cells \footnote{http://www.broadinstitute.org/bbbc/BBBC008/}. Our algorithm works well in this case as well.

\section{Conclusion}

In this paper, we presented a general construction of adaptable tight wavelet frames, focusing on scaling operations. We applied our wavelet-based framework to detect and estimate the scale of circular structures in images. 
The attractive features of our algorithm are (i) our wavelets can be scaled on a quasi continuum without significant computational overhead; (ii) robustness; and (iii), speed.  The effectiveness of our approach in practical applications was demonstrated on synthetic and real biological data in the presence and absence of background signal.

\bibliographystyle{siam}
\bibliography{main}

\end{document}